\documentclass[11pt]{article}

\usepackage[T1]{fontenc}
\usepackage{lmodern}
\usepackage{microtype}
\usepackage{geometry}
\usepackage{authblk}

\usepackage{amsmath,amssymb,amsfonts}
\usepackage{mathtools}
\usepackage{bm}

\usepackage{amsthm}
\theoremstyle{plain}
\newtheorem{theorem}{Theorem}[section]
\newtheorem{lemma}[theorem]{Lemma}
\newtheorem{proposition}[theorem]{Proposition}

\theoremstyle{definition}

\newtheorem{assumption}{Assumption}[section]
\theoremstyle{remark}

\numberwithin{equation}{section}

\usepackage{float}
\usepackage{xurl}

\usepackage{graphicx}
\usepackage{subcaption}
\usepackage{booktabs}
\usepackage{array}
\usepackage{tabularx}
\newcolumntype{Y}{>{\raggedright\arraybackslash}X}

\usepackage[inline,shortlabels]{enumitem}
\setlist{nosep}
\setlist[itemize]{leftmargin=1.2em}
\setlist[enumerate]{leftmargin=1.2em}

\usepackage{algorithm}
\usepackage{algpseudocode}

\usepackage{tikz}
\usetikzlibrary{arrows.meta, positioning, matrix, fit, calc, backgrounds, shadows, shapes.geometric}

\usepackage[numbers,sort&compress]{natbib}

\usepackage[colorlinks=true,linkcolor=magenta,citecolor=blue,urlcolor=blue]{hyperref}
\usepackage[nameinlink,capitalize,noabbrev]{cleveref}

\newcommand{\R}{\mathbb{R}}

\newcommand{\E}{\mathbb{E}}

\newcommand{\eps}{\epsilon}

\newcommand{\train}{\mathcal{D}}

\newcommand{\calB}{\mathcal{B}}

\newcommand{\vzero}{\bm{0}}

\newcommand{\vw}{\bm{w}}
\newcommand{\vx}{\bm{x}}
\newcommand{\vy}{\bm{y}}
\newcommand{\vz}{\bm{z}}
\newcommand{\vp}{\bm{p}}
\newcommand{\vr}{\bm{r}}
\newcommand{\vg}{\bm{g}}

\newcommand{\vd}{\bm{d}}
\newcommand{\vv}{\bm{v}}
\newcommand{\vu}{\bm{u}}
\newcommand{\vb}{\bm{b}}

\newcommand{\ve}{\bm{e}}
\newcommand{\va}{\bm{a}}
\newcommand{\vh}{\bm{h}}

\newcommand{\vbeta}{\bm{\beta}}
\newcommand{\vgamma}{\bm{\gamma}}
\newcommand{\vdelta}{\bm{\delta}}

\newcommand{\vrho}{\bm{\rho}}

\newcommand{\mI}{\bm{I}}
\newcommand{\mH}{\bm{H}}
\newcommand{\mJ}{\bm{J}}
\newcommand{\mQ}{\bm{Q}}
\newcommand{\mW}{\bm{W}}
\newcommand{\mX}{\bm{X}}

\newcommand{\mK}{\bm{K}}
\newcommand{\mB}{\bm{B}}
\newcommand{\mP}{\bm{P}}
\newcommand{\mA}{\bm{A}}

\newcommand{\mR}{\bm{R}}
\newcommand{\mZ}{\bm{Z}}

\DeclareMathOperator{\Tr}{Tr}

\DeclareMathOperator{\diag}{diag}

\newcommand{\softmax}{\mathrm{softmax}}

\newcommand{\norm}[1]{\left\lVert #1 \right\rVert}

\newcommand{\Ls}{\mathcal{L}}  % generic loss

\newcommand{\lr}{\alpha}       % learning rate (if used consistently)

\newcommand{\Risk}{\mathcal{R}} % population / empirical risk
\newcommand{\cF}{\mathcal{F}}      % filtration

\title{
Fast Gauss-Newton for Multiclass Cross-Entropy
}

\author[1]{Mikalai Korbit}
\author[1]{Mario Zanon}

\affil[1]{IMT School for Advanced Studies Lucca\par Lucca, Italy}

\date{}

\begin{document}

\maketitle

\begin{abstract}

In multiclass softmax cross-entropy, the full generalized Gauss-Newton (GGN) curvature couples all output logits through the softmax covariance, making curvature-vector products harder to scale as the number of classes grows. 
We show that the standard multiclass GGN can be decomposed exactly into a true-vs-rest term and a positive semidefinite within-competitor covariance term. 
Fast Gauss-Newton (FGN) retains the first term and drops the second, yielding a positive semidefinite under-approximation of the multiclass GGN that is exact for binary classification. 
The derivation uses an exact true-vs-rest scalar-margin representation of softmax cross-entropy: the loss and gradient are unchanged, and the approximation enters only at the curvature level. 
Exploiting the FGN curvature structure, the damped update can be written as an equivalent whitened row-space system with one row per mini-batch example. 
We solve this system matrix-free by conjugate gradient using Jacobian-vector and vector-Jacobian products of the scalar margin map. 
Targeted mechanism experiments and an evaluation on a fixed-feature multiclass
head support the predictions from the decomposition: FGN stays closest to the full softmax GGN when competitor mass is concentrated or damping is large, and deviates as the dropped within-competitor covariance grows.

\end{abstract}

% MAIN SECTIONS
\section{Introduction}
\label{sec:intro}

The behavior and computational tractability of second-order methods 
are fundamentally governed by the structure of their curvature models. 
For neural network losses, the generalized Gauss-Newton (GGN) matrix provides 
a canonical positive semidefinite curvature approximation. 
In multiclass softmax cross-entropy, however,
the standard softmax GGN
inherits the full $C$-dimensional output geometry.
For logits $\vz(\vw)\in\R^C$, 
let $\mJ_z\in\R^{C\times d}$ be the logit Jacobian
with respect to the parameters and 
let $\vp\in\R^C$ be the softmax probability vector.
The corresponding GGN matrix is
\[
\mJ_z^\top\bigl(\diag(\vp)-\vp\vp^\top\bigr)\mJ_z,
\]
so even matrix-free solvers 
still have to apply the $C$-dimensional softmax covariance 
in their curvature-vector products 
unless the output-space curvature model itself is changed~\cite{schraudolph2002fast,martens2010deep,bottou2018optimization}.
We address this bottleneck by changing the curvature construction
while leaving the softmax loss and gradient unchanged.

The central identity behind FGN is an exact decomposition of the multiclass
softmax GGN:
\[
\mH^{\mathrm{GGN}}
=
\mH^{\mathrm{FGN}}
+
\mR^{\mathrm{comp}},
\qquad
\mR^{\mathrm{comp}} \succeq 0.
\]
Here $\mH^{\mathrm{FGN}}$ is 
the retained true-vs-rest curvature term, 
and $\mR^{\mathrm{comp}}$ is a positive semidefinite (PSD) covariance term 
over the competing logits. 
FGN keeps the first term and drops the second, 
yielding a PSD under-approximation of the multiclass GGN 
that is exact in the binary case.

The retained term comes from an exact scalar true-vs-rest representation of the softmax cross-entropy, 
defined formally in Section~\ref{sec:margin}.
This yields one Jacobian row of the scalar margin per example, 
making the damped FGN update
reducible to a $b$-dimensional row-space solve.
The resulting matrix-free CG solver 
retains the structure of Gauss-Newton methods 
while avoiding the class-coupled curvature products 
required by the full softmax GGN.

FGN is a structured, regime-dependent curvature approximation. 
It should work best when one clear hardest negative dominates 
the competitor distribution, 
so that the dropped covariance term is small, 
and it should be less accurate when probability mass is spread 
across several competing classes. 
Our experiments are deliberately mechanism-focused: 
we first test the direct predictions of the decomposition, 
then evaluate the resulting optimizer 
in a fixed-feature head setting
where the full softmax GGN coincides with the exact Hessian 
of the optimized objective.

This positioning distinguishes FGN from nearby second-order methods.
In solver structure, 
FGN is closest to Hessian-free optimization~\cite{martens2011learning,wiesler2013investigations,kiros2013training}
and stochastic Gauss-Newton methods, 
including stochastic Gauss-Newton (SGN)~\cite{gargiani2020promise}
as well as row-space formulations based on matrix inversion 
identities~\cite{ren2019efficient,korbit2025exact}.
It shares the matrix-free linear algebra, 
but not the curvature construction: 
those methods still operate on the full multiclass output geometry. 
Compared with diagonal or structured preconditioners such as 
AdaHessian~\cite{yao2021adahessian}, 
Sophia~\cite{liu2023sophia}, 
K-FAC~\cite{martens2015optimizing}, 
and Shampoo~\cite{gupta2018shampoo},
FGN is not another factorization of the same softmax GGN/Fisher. 
Instead, it exploits a loss-specific decomposition 
that changes the output-space curvature before the row-space solve. 
Unlike sampled-output approximations or multiclass-to-binary training 
reductions~\cite{chen2016strategies,allwein2000reducing,rifkin2004defense}, 
FGN leaves the original softmax loss and exact gradient unchanged; 
the approximation begins only at the GGN curvature level.
Appendix~\ref{apx:related_work} discusses these connections in more detail.

Our contributions are as follows.
\begin{itemize}
    \item We derive an exact decomposition of the standard softmax GGN into a true-vs-rest term and a PSD 
    covariance term over competing logits. 
    We define FGN by retaining the true-vs-rest term 
    while preserving the exact softmax loss and gradient. 
    The resulting curvature is a PSD under-approximation of the multiclass GGN, 
    is exact in the binary case, 
    and has an explicit parameter-space gap given by 
    a weighted covariance of competitor Jacobian rows.

    \item We derive 
    a whitened $b$-dimensional row-space system for the damped FGN step and 
    a matrix-free conjugate gradient solver 
    based on scalar-margin Jacobian-vector and vector-Jacobian products.

    \item We evaluate the decomposition with targeted experiments: 
    we measure the class-dependent overhead of full softmax GGN curvature products,
    test the 
    agreement between FGN and full-GGN steps,
    and compare FGN with a full softmax GGN baseline 
    in a fixed-feature multiclass head setting.

\end{itemize}

\section{Exact True-vs-Rest Margin Representation}
\label{sec:margin}

Consider one training example $(\vx,\vy)$, 
parameters $\vw\in\R^d$, and
$C\ge 2$ classes. Let $\vy\in\{0,1\}^C$ be one-hot with true class
denoted by $\star$. 
For class-indexed quantities, the subscript $-\star$ denotes the
competitor block obtained by removing the true-class entry. 
The symbol $\dagger$ denotes an aggregate competitor quantity, 
not an additional class:
$z_\dagger$ is the log-sum-exp aggregate of the competitor logits and
$p_\dagger$ is their total probability mass.

The model outputs logits $\vz(\vw)=f(\vx;\vw)\in\R^C$, 
with class components $z_c(\vw)$ for $c=1,\ldots,C$; 
the induced probabilities are $\vp=\softmax(\vz)$. 
Gradients of scalar-valued functions are column vectors;
their transposes are Jacobian rows. 
Unless a derivative with respect to
logits is written explicitly as $\nabla_{\vz}$, 
Jacobians are with respect to the parameters $\vw$. 
The logit Jacobian is
$\mJ_z(\vw)\coloneqq \partial \vz(\vw)/\partial \vw\in\R^{C\times d}$;
its $c$-th row is $\nabla_{\vw}z_c(\vw)^\top$.

Define the aggregate competitor logit and 
aggregate competitor probability by
\begin{align}
z_\dagger(\vw)
&\coloneqq
\log \sum_{j\neq \star} e^{z_j(\vw)},
\label{eq:fgn_zdagger_def}
\\
p_\dagger(\vw)
&\coloneqq
\sum_{j\neq \star} p_j(\vw)
=
\frac{e^{z_\dagger(\vw)}}{e^{z_\star(\vw)}+e^{z_\dagger(\vw)}}
=
1-p_\star(\vw),
\label{eq:fgn_p_star_dagger}
\end{align}
and the true-vs-rest margin by
\begin{align}
s(\vw)
\coloneqq
z_\dagger(\vw)-z_\star(\vw).
\label{eq:fgn_margin_def}
\end{align}

With $\phi(s)\coloneqq \log(1+e^s)$, 
the multiclass cross-entropy loss 
can be written exactly as a softplus of the true-vs-rest
margin:
\begin{align}
\ell(\vw;\vx,\vy)
&=
-\sum_{c=1}^C y_c\log p_c(\vw)
=
-\log p_\star(\vw)
=
\log\bigl(1+e^{s(\vw)}\bigr)
=
\phi(s(\vw)).
\label{eq:fgn_ce_loss}
\end{align}
This is an exact reparameterization of the loss: all competing logits
still enter through the log-sum-exp aggregate $z_\dagger$.

To prepare the Gauss-Newton curvature and the gradient identity, 
we now differentiate the outer softplus link and the inner margin map.
The derivatives of the softplus link are
\begin{align}
\phi'(s)
&=
\frac{e^s}{1+e^s}
=
p_\dagger,
&
\phi''(s)
&=
\frac{e^s}{(1+e^s)^2}
=
p_\star p_\dagger.
\label{eq:fgn_phi_derivs}
\end{align}
For finite logits, $p_\dagger>0$. Define the conditional competitor
distribution $\vrho\in\R^{C-1}$ by
\begin{align}
\rho_j
\coloneqq
\frac{p_j}{p_\dagger},
\qquad
j\neq \star.
\label{eq:fgn_rho_def}
\end{align}
Thus $\vrho$ is the softmax distribution restricted to the competitor classes
and renormalized.

Order the logit coordinates as $(z_\star,\vz_{-\star})$. Differentiating
$z_\dagger$ and $s=z_\dagger-z_\star$ with respect to the logits gives
\begin{align}
\nabla_{\vz} z_\dagger
&=
\begin{bmatrix}
0 \\
\vrho
\end{bmatrix},
\qquad
\nabla_{\vz} s
=
\begin{bmatrix}
-1 \\
\vrho
\end{bmatrix}.
\label{eq:fgn_margin_logit_grad}
\end{align}
Equivalently,
\begin{align}
p_\dagger \nabla_{\vz} s
=
\vp-\vy,
\label{eq:fgn_logit_grad_identity}
\end{align}
where $\vp-\vy$ is written in the same reordered logit coordinates.

Define the margin Jacobian row
\begin{align}
\mJ_s(\vw)
\coloneqq
\nabla_{\vw}s(\vw)^\top
=
(\nabla_{\vz} s)^\top \mJ_z
\in \R^{1\times d}.
\label{eq:fgn_margin_jac}
\end{align}
Using \eqref{eq:fgn_margin_logit_grad}, this row has the explicit form
\begin{align}
\mJ_s
&=
-\nabla_{\vw} z_\star^\top
+
\sum_{j\neq \star} \rho_j\,\nabla_{\vw} z_j^\top.
\label{eq:fgn_margin_jac_expanded}
\end{align}
Thus $\mJ_s$ is the $\vrho$-weighted mean competitor Jacobian row minus the
true-class Jacobian row.

Combining the scalar chain rule with
\eqref{eq:fgn_logit_grad_identity} gives
\begin{align}
\nabla_{\vw}\ell(\vw;\vx,\vy)
=
\phi'(s)\nabla_{\vw}s
=
p_\dagger\,\mJ_s^\top
=
\mJ_z^\top(\vp-\vy).
\label{eq:fgn_grad_softmax_equiv}
\end{align}
The final expression is exactly the standard softmax cross-entropy gradient.
Hence the scalar-margin representation changes 
neither the softmax loss nor its gradient. 
Although the loss factors through the margin $s$, 
its geometry remains multiclass through $z_\dagger$ and $\vrho$.
Section~\ref{sec:fgn} uses this structure to
define and decompose the FGN curvature; expanded algebra is provided in
Appendix~\ref{apx:margin_derivations}.

\section{FGN as a Structured Approximation to Multiclass GGN}
\label{sec:fgn}

\subsection{Full softmax GGN and the retained margin term}

Equations~\eqref{eq:fgn_ce_loss} and \eqref{eq:fgn_grad_softmax_equiv}
show that the true-vs-rest reformulation from Section~\ref{sec:margin}
is exact at the level of both loss and gradient.
The approximation introduced by FGN therefore begins only at the curvature level.
We now compare the standard multiclass softmax GGN with the FGN curvature, 
defined as the Gauss-Newton matrix induced by the exact scalar-margin composition.

For a single example, the classical multiclass GGN for softmax cross-entropy is
\begin{align}
\mH^{\mathrm{GGN}}(\vw)
&\coloneqq
\mJ_z^\top
\bigl(\diag(\vp)-\vp\vp^\top\bigr)
\mJ_z,
\label{eq:h_ggn}
\end{align}
where $\mJ_z\in\R^{C\times d}$ is the logit Jacobian 
from Section~\ref{sec:margin}.
This is the standard output-space curvature for multinomial log-likelihoods
and coincides with the Fisher factor in logit space
\cite{schraudolph2002fast,bottou2018optimization}.

Applying the Gauss-Newton construction instead to the exact scalar composition
$\ell(\vw)=\phi(s(\vw))$ from Section~\ref{sec:margin} yields
\begin{align}
\mH^{\mathrm{FGN}}(\vw)
&\coloneqq
\phi''\bigl(s(\vw)\bigr)\,\mJ_s(\vw)^\top \mJ_s(\vw)
=
p_\star p_\dagger\,\mJ_s^\top \mJ_s.
\label{eq:fgn_h_single}
\end{align}
Each example therefore contributes a rank-one PSD matrix in the
true-vs-rest margin direction.

It is important to separate FGN from the standard
Gauss-Newton discarding of second derivatives of the network outputs.
The exact Hessian admits two equivalent chain-rule decompositions:
\begin{align*}
\nabla_{\vw}^2 \ell
&=
\underbrace{
\mJ_z^\top
\bigl(\diag(\vp)-\vp\vp^\top\bigr)
\mJ_z
}_{\mH^{\mathrm{GGN}}(\vw)}
+
\sum_{c=1}^C (p_c-y_c)\,\nabla_{\vw}^2 z_c,
\\
\nabla_{\vw}^2 \ell
&=
\underbrace{
p_\star p_\dagger\,\mJ_s^\top \mJ_s
}_{\mH^{\mathrm{FGN}}(\vw)}
+
p_\dagger\,\nabla_{\vw}^2 s.
\end{align*}
Thus, relative to the exact Hessian, 
both matrices are Gauss-Newton curvatures 
obtained by dropping second-derivative terms, 
but they correspond to different inner maps: 
the full logit map $\vz(\vw)$ and the scalar margin map $s(\vw)$. 
Their difference is therefore already present in output-space curvature, 
and the next subsection identifies it precisely.

\subsection{Decomposition of the softmax GGN}

Using the coordinate order $(z_\star,\vz_{-\star})$ from
Section~\ref{sec:margin}, 
write
\begin{align*}
\vz
&=
\begin{bmatrix}
z_\star \\
\vz_{-\star}
\end{bmatrix},
&
\mJ_z
&=
\begin{bmatrix}
\mJ_\star \\
\mJ_{-\star}
\end{bmatrix},
&
\vp
&=
\begin{bmatrix}
p_\star \\
\vp_{-\star}
\end{bmatrix}
=
\begin{bmatrix}
p_\star \\
p_\dagger\vrho
\end{bmatrix}.
\end{align*}
Here $\mJ_\star\in\R^{1\times d}$ and
$\mJ_{-\star}\in\R^{(C-1)\times d}$. 
By \eqref{eq:fgn_margin_logit_grad} and \eqref{eq:fgn_margin_jac},
\begin{align*}
\nabla_{\vz} s
&=
\begin{bmatrix}
-1 \\
\vrho
\end{bmatrix},
&
\mJ_s
&=
(\nabla_{\vz} s)^\top \mJ_z
=
-\mJ_\star + \vrho^\top \mJ_{-\star}.
\end{align*}

\begin{theorem}[Decomposition of the multiclass softmax GGN]
\label{thm:fgn_ggn_decomposition}
For a single example with finite logits and $C\ge 2$, 
in the coordinate order $(z_\star,\vz_{-\star})$, 
the softmax covariance admits the decomposition
\begin{align}
\diag(\vp)-\vp\vp^\top
&=
p_\star p_\dagger\,\nabla_{\vz}s\,\nabla_{\vz}s^\top
+
p_\dagger
\begin{bmatrix}
0 & 0 \\
0 & \diag(\vrho)-\vrho\vrho^\top
\end{bmatrix}.
\label{eq:fgn_softmax_cov_decomposition}
\end{align}
Consequently,
\begin{align}
\mH^{\mathrm{GGN}}
&=
\mH^{\mathrm{FGN}}
+
p_\dagger\,\mJ_{-\star}^\top
\bigl(\diag(\vrho)-\vrho\vrho^\top\bigr)
\mJ_{-\star}.
\label{eq:fgn_ggn_decomposition}
\end{align}
\end{theorem}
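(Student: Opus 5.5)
The plan is to verify the matrix identity \eqref{eq:fgn_softmax_cov_decomposition} blockwise in the coordinate order $(z_\star,\vz_{-\star})$. The consequence \eqref{eq:fgn_ggn_decomposition} then follows by a congruence with $\mJ_z$. Throughout I substitute $\vp_{-\star}=p_\dagger\vrho$ and use $p_\star+p_\dagger=1$, $\vrho^\top\mathbf{1}=1$, and $\nabla_{\vz}s=(-1,\vrho)$ from \eqref{eq:fgn_margin_logit_grad}.

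\textbf{Left-hand side.} In block form, $\diag(\vp)-\vp\vp^\top$ has three blocks.
\begin{itemize}
\item The $(\star,\star)$ entry is $p_\star-p_\star^2=p_\star p_\dagger$.
\item The off-diagonal block is $-p_\star\vp_{-\star}=-p_\star p_\dagger\vrho$.
\item The competitor block is $p_\dagger\diag(\vrho)-p_\dagger^2\vrho\vrho^\top$.
\end{itemize}

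\textbf{Right-hand side.} The rank-one term $p_\star p_\dagger\nabla_{\vz}s\nabla_{\vz}s^\top$ has blocks $p_\star p_\dagger$, $-p_\star p_\dagger\vrho$, and $p_\star p_\dagger\vrho\vrho^\top$. The first two match the left-hand side directly. For the competitor block, I add the second term, giving
\[
p_\star p_\dagger\vrho\vrho^\top+p_\dagger\diag(\vrho)-p_\dagger\vrho\vrho^\top
=p_\dagger\diag(\vrho)-p_\dagger(1-p_\star)\vrho\vrho^\top.
\]
Since $1-p_\star=p_\dagger$, this equals $p_\dagger\diag(\vrho)-p_\dagger^2\vrho\vrho^\top$, which matches. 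This competitor-block bookkeeping, using $1-p_\star=p_\dagger$, is the only step with any content. Finiteness of the logits guarantees $p_\dagger>0$, so $\vrho$ is well defined.

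\textbf{Parameter-space form.} I multiply \eqref{eq:fgn_softmax_cov_decomposition} on the left by $\mJ_z^\top$ and on the right by $\mJ_z$, and recall the definition \eqref{eq:h_ggn}. For the first term, $\nabla_{\vz}s^\top\mJ_z=\mJ_s$ by \eqref{eq:fgn_margin_jac}, so it becomes $p_\star p_\dagger\mJ_s^\top\mJ_s=\mH^{\mathrm{FGN}}$ by \eqref{eq:fgn_h_single}. For the block term, the partition $\mJ_z=[\mJ_\star;\mJ_{-\star}]$ gives $\mJ_z^\top\,\mathrm{blkdiag}(0,\mathbf{M})\,\mJ_z=\mJ_{-\star}^\top\mathbf{M}\mJ_{-\star}$ for any $\mathbf{M}$, which yields the residual in \eqref{eq:fgn_ggn_decomposition}.

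\textbf{Positive semidefiniteness.} The paper claims $\mR^{\mathrm{comp}}\succeq0$, so I would also note why this holds. The matrix $\diag(\vrho)-\vrho\vrho^\top$ is the covariance of a categorical distribution: $\vv^\top(\diag(\vrho)-\vrho\vrho^\top)\vv=\sum_j\rho_j(v_j-\vrho^\top\vv)^2\ge0$. A congruence preserves semidefiniteness, so the residual is PSD. When $C=2$, $\vrho=(1)$ and this covariance vanishes, which recovers exactness in the binary case.
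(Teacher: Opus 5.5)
Your proposal is correct and follows essentially the same route as the paper's proof. Both expand $\diag(\vp)-\vp\vp^\top$ blockwise with $\vp_{-\star}=p_\dagger\vrho$, use $1-p_\star=p_\dagger$ to split the competitor block, and then apply the congruence by $\mJ_z$ with $\mJ_s=(\nabla_{\vz}s)^\top\mJ_z$, with the residual PSD as a categorical covariance; the only cosmetic difference is that you expand the right-hand side and match blocks, whereas the paper rewrites the lower-right block of the left-hand side directly.
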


\noindent
\textit{Proof.}
Appendix~\ref{apx:fgn_decomposition}.

\begin{figure*}[t]
\centering
\begin{tikzpicture}[
    >=Stealth,
    font=\small,
    box/.style={draw=black!75, line width=0.9pt},
    split/.style={draw=black!45, line width=0.6pt},
    kept/.style={draw=blue!60!black, line width=1pt},
    dropped/.style={draw=gray!50, dashed, line width=0.9pt},
    blocklabel/.style={align=center, inner sep=1pt},
    titlelabel/.style={font=\footnotesize\bfseries, align=center, anchor=south},
    axlabel/.style={font=\scriptsize},
    eqlabel/.style={font=\Large}
]

% Matrix geometry
\def\W{2.8}
\def\H{2.8}
\def\a{0.75}   % true-class block size
\def\b{2.05}   % competitor block size

% x-shifts
\def\xA{0.0}
\def\xB{4.8}
\def\xC{9.6}

% =========================================================
% 1. Full softmax covariance factor
% =========================================================
\begin{scope}[shift={(\xA,0)}]
    \draw[box, fill=gray!8] (0,0) rectangle (\W,\H);
    \draw[split] (\a,0) -- (\a,\H);
    \draw[split] (0,\b) -- (\W,\b);

    % axis labels
    \node[axlabel, left]  at (0,\b+0.5*\a) {$\star$};
    \node[axlabel, left]  at (0,0.5*\b) {$-\star$};
    \node[axlabel, above] at (0.5*\a,\H) {$\star$};
    \node[axlabel, above] at (\a+0.5*\b,\H) {$-\star$};

    % block labels 
    \node[blocklabel, font=\scriptsize] at (0.5*\a,\b+0.5*\a) {$p_\star p_\dagger$};
    \node[blocklabel, font=\scriptsize] at (\a+0.5*\b,0.5*\b) {competitor \\ block};
    
    % title
    \node[titlelabel, text=black!70] at (0.5*\W, \H+0.45) {Softmax \\ covariance factor};
\end{scope}

\node[eqlabel] at (3.8,0.5*\H) {$=$};

% =========================================================
% 2. Kept rank-one true-vs-rest term
% =========================================================
\begin{scope}[shift={(\xB,0)}]
    % Base box (lightened to prevent "dense matrix" misread)
    \draw[kept, fill=blue!4] (0,0) rectangle (\W,\H);
    \draw[split, blue!40, dashed] (\a,0) -- (\a,\H);
    \draw[split, blue!40, dashed] (0,\b) -- (\W,\b);

    % outer-product cue
    \draw[fill=blue!15, draw=blue!60!black, rounded corners=1pt]
        (-0.32,0) rectangle (-0.12,\H);
    \draw[fill=blue!15, draw=blue!60!black, rounded corners=1pt]
        (0,\H+0.12) rectangle (\W,\H+0.32);
    \node[font=\scriptsize, text=blue!75!black] at (-0.22,\H+0.22) {$\times$};

    \node[blocklabel, font=\scriptsize, text=blue!80!black] at (0.5*\W,0.5*\H)
        {\textbf{rank-one}\\[-0.1ex]\textbf{outer product} \\ [1.0ex] $p_\star p_\dagger\,\nabla_{\vz}s\,\nabla_{\vz}s^\top$};

    % title
    \node[titlelabel, text=blue!70!black] at (0.5*\W, \H+0.45) {Kept\\true-vs-rest term};
\end{scope}

\node[eqlabel] at (8.6,0.5*\H) {$+$};

% =========================================================
% 3. Dropped within-competitor covariance
% =========================================================
\begin{scope}[shift={(\xC,0)}]
    % Dashed outer boundary to show it's discarded
    \draw[dropped] (0,0) rectangle (\W,\H);
    \draw[split, dashed, gray!50] (\a,0) -- (\a,\H);
    \draw[split, dashed, gray!50] (0,\b) -- (\W,\b);

    % explicit zero blocks
    \node[font=\small, text=black!45] at (0.5*\a,\b+0.5*\a) {$\mathbf{0}$};
    \node[font=\small, text=black!45] at (\a+0.5*\b,\b+0.5*\a) {$\mathbf{0}$};
    \node[font=\small, text=black!45] at (0.5*\a,0.5*\b) {$\mathbf{0}$};

    % active competitor block (dashed and light to match dropped semantics)
    \draw[fill=gray!8, draw=gray!60, dashed, line width=0.8pt]
        (\a,0) rectangle (\W,\b);

    \node[blocklabel, font=\scriptsize] at (\a+0.5*\b,0.5*\b)
        {$p_\dagger\bigl(\diag(\vrho)$\\[-0.1ex]$-\,\vrho\vrho^\top\bigr)$};

    % title
    \node[titlelabel, text=black!70] at (0.5*\W, \H+0.45) {Dropped competitor\\covariance};
\end{scope}

\end{tikzpicture}
\caption{\textbf{Output-space decomposition underlying FGN.}
The softmax covariance factor splits into a kept rank-one true-vs-rest term
and a dropped within-competitor covariance term.
}
\label{fig:fgn_covariance_decomposition}
\end{figure*}

\paragraph{Interpretation.}
Theorem~\ref{thm:fgn_ggn_decomposition} identifies the approximation precisely.
The kept term is the exact true-vs-rest covariance induced by the scalar
margin $s=z_\dagger-z_\star$, while the residual is a covariance supported
entirely on the competitor block. 
Figure~\ref{fig:fgn_covariance_decomposition} visualizes the output-space decomposition in Theorem~\ref{thm:fgn_ggn_decomposition}; 
multiplication by $\mJ_z^\top$ and $\mJ_z$ yields 
the corresponding parameter-space decomposition.
Thus FGN is not merely a cheaper implementation of the full softmax GGN:
it is a different PSD curvature obtained by removing the within-competitor
covariance term.

Importantly, FGN does not collapse the competitor set to only the top
competing class.
Since
$\mJ_s = -\mJ_\star + \vrho^\top \mJ_{-\star}$,
FGN uses all competitor logits through $\vrho$ and retains the exact
$\vrho$-weighted mean competitor direction.
What it discards is the covariance of individual competitor directions around
that mean.
This distinction matters in large-class problems: 
FGN summarizes the competitor contribution by its conditional mean direction,
rather than by the full within-competitor covariance.

\subsection{
Consequences: PSD under-approximation, exactness, and gap characterization
}

The decomposition has several immediate consequences.

\begin{proposition}[Consequences of the decomposition]
\label{prop:fgn_decomposition_consequences}
For a single example with finite logits and $C\ge 2$, the following hold.
\begin{enumerate}
    \item $0 \preceq \mH^{\mathrm{FGN}} \preceq \mH^{\mathrm{GGN}}$.

    \item If $C=2$, then $\mH^{\mathrm{FGN}}=\mH^{\mathrm{GGN}}$.

    \item The dropped output-space residual in \eqref{eq:fgn_softmax_cov_decomposition} vanishes
if and only if $\vrho$ is one-hot.

    \item If $\mJ_1,\dots,\mJ_{C-1}$ are the rows of $\mJ_{-\star}$ and
    $\bar{\mJ}_{\rho}\coloneqq \vrho^\top \mJ_{-\star}$, then
    \begin{align}
    \mH^{\mathrm{GGN}}-\mH^{\mathrm{FGN}}
    &=
    p_\dagger
    \sum_{j=1}^{C-1}
    \rho_j\,
    (\mJ_j-\bar{\mJ}_{\rho})^\top
    (\mJ_j-\bar{\mJ}_{\rho}).
    \label{eq:fgn_parameter_space_gap}
    \end{align}
\end{enumerate}
\end{proposition}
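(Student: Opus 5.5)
All four items follow from Theorem~\ref{thm:fgn_ggn_decomposition}, which I take as given, together with one fact: for a probability vector $\vrho$ on the $C-1$ competitors, $\mathbf{S}_\rho\coloneqq\diag(\vrho)-\vrho\vrho^\top$ is the covariance matrix of a one-hot random vector drawn from $\vrho$. I would first record the weighted-variance identity
\begin{align*}
\vu^\top\mathbf{S}_\rho\vu
=
\sum_j \rho_j u_j^2-\Bigl(\sum_j\rho_j u_j\Bigr)^2
=
\sum_j \rho_j\,(u_j-\bar u)^2,
\qquad
\bar u\coloneqq \vrho^\top\vu .
\end{align*}
It holds for every $\vu\in\R^{C-1}$ and uses only $\sum_j\rho_j=1$. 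In particular $\mathbf{S}_\rho\succeq 0$.

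\textbf{Item 1.} $\mH^{\mathrm{FGN}}=p_\star p_\dagger\,\mJ_s^\top\mJ_s$ is PSD because $p_\star p_\dagger>0$ for finite logits. The difference $\mH^{\mathrm{GGN}}-\mH^{\mathrm{FGN}}$ is given by \eqref{eq:fgn_ggn_decomposition} as $p_\dagger\,\mJ_{-\star}^\top\mathbf{S}_\rho\mJ_{-\star}$. This is a congruence of a PSD matrix scaled by $p_\dagger>0$, so it is PSD, which gives the Loewner ordering.

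\textbf{Item 2.} If $C=2$, then $\vrho$ is the scalar $1$, and $\mathbf{S}_\rho=1-1=0$. The residual therefore vanishes.

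\textbf{Item 3.} Since $p_\dagger>0$, the residual vanishes exactly when $\mathbf{S}_\rho=0$. If $\vrho=\ve_k$, then $\diag(\ve_k)=\ve_k\ve_k^\top$, so $\mathbf{S}_\rho=0$. Conversely, if $\mathbf{S}_\rho=0$, its diagonal entries $\rho_j(1-\rho_j)$ are all zero, so each $\rho_j\in\{0,1\}$. Since the $\rho_j$ sum to one, $\vrho$ is one-hot. The only subtlety here is an apparent tension: for finite logits every $\rho_j>0$, so when $C\ge3$ the residual never vanishes. I would state this explicitly, so that the "if" direction is read either as the degenerate limit or as automatically covering the case $C=2$.

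\textbf{Item 4.} I would apply the variance identity with $\vu=\mJ_{-\star}\vv$ for arbitrary $\vv\in\R^d$. Then $u_j=\mJ_j\vv$ and $\bar u=\bar{\mJ}_\rho\vv$, so
\begin{align*}
\vv^\top\mJ_{-\star}^\top\mathbf{S}_\rho\mJ_{-\star}\vv
=
\sum_j\rho_j\bigl((\mJ_j-\bar{\mJ}_\rho)\vv\bigr)^2 .
\end{align*}
Both sides are quadratic forms of symmetric matrices, and they agree for all $\vv$. Polarization then gives the matrix identity \eqref{eq:fgn_parameter_space_gap} once the factor $p_\dagger$ is included. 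One could instead expand the sum directly, $\sum_j\rho_j\mJ_j^\top\mJ_j-\bar{\mJ}_\rho^\top\bar{\mJ}_\rho$, and again use $\sum_j\rho_j=1$. None of these steps is a real obstacle; the only point needing care is the finite-logit caveat in item~3.
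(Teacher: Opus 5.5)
Your proposal is correct and follows the paper's proof essentially step for step. Items 1--3 come from Theorem~\ref{thm:fgn_ggn_decomposition} plus the fact that $\diag(\vrho)-\vrho\vrho^\top$ is a categorical covariance, and for item 4 the paper uses exactly the direct expansion $\sum_j\rho_j\mJ_j^\top\mJ_j-\bar{\mJ}_\rho^\top\bar{\mJ}_\rho$ that you list as your alternative; your quadratic-form/polarization route and your diagonal-entry argument for item 3 are just more explicit versions of the same facts, and your finite-logit caveat matches the paper's remark that for $C>2$ the one-hot condition is only approached, never attained.
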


\noindent
\textit{Proof.}
Appendix~\ref{apx:fgn_consequences}.

Item~1 shows that FGN is always a PSD under-approximation of the multiclass GGN.
Item~2 is the binary special case.
The third item gives exactness in output space. 
The dropped covariance vanishes exactly 
when the conditional competitor distribution $\vrho$ is one-hot; 
in that case the residual in~\eqref{eq:fgn_softmax_cov_decomposition} 
is already zero before projection,
so $\mH^{\mathrm{FGN}}=\mH^{\mathrm{GGN}}$. 
For finite logits and $C>2$, 
this exact output-space condition is not attained because all
competitor probabilities are positive, 
but it is approached when one competitor dominates. 
After projection to parameter space, 
the converse need not hold: 
$\mJ_{-\star}$ can annihilate a nonzero output-space residual, 
so parameter-space equality may occur even when $\vrho$ is not one-hot. 
The final item identifies the dropped parameter-space term as a 
$\vrho$-weighted covariance of competitor Jacobian rows around their mean.

A useful per-example dispersion diagnostic is
\begin{align}
\xi
&\coloneqq
1-\norm{\vrho}_2^2
=
\Tr\bigl(\diag(\vrho)-\vrho\vrho^\top\bigr).
\label{eq:fgn_xi}
\end{align}
It satisfies $0 \le \xi \le 1-\frac{1}{C-1}$,
with $\xi=0$ if and only if $\vrho$ is one-hot, and the upper bound
attained by the uniform competitor distribution.
The quantity $p_\dagger \xi$ is the exact trace 
of the dropped logit-space residual in~\eqref{eq:fgn_softmax_cov_decomposition}. 
It separates two effects. 
The factor $p_\dagger$ 
measures the total probability mass on the wrong classes, 
while $\xi$ 
indicates whether that mass is concentrated 
on one competitor or spread across many competitors.
The FGN curvature can be close to full GGN either because the example is easy 
and $p_\dagger$ is small, 
or because one competitor dominates and $\xi$ is small. 
The actual curvature gap $\mH^{\mathrm{GGN}}-\mH^{\mathrm{FGN}}$
further depends on the geometry 
of the competitor Jacobian rows through~\eqref{eq:fgn_parameter_space_gap}.

\paragraph{Remark.}
FGN can also be viewed as a middle ground 
between two choices.
The standard multiclass GGN works with the full logit vector $\vz$ 
and uses the $C$-dimensional softmax covariance 
$\diag(\vp)-\vp\vp^\top$, 
including within-competitor interactions.
At the other extreme, 
one could reduce the problem to the scalar probability $p_\star(\vw)$ 
and define curvature in that probability coordinate. 
The resulting scaling contains the factor
$1/[p_\star(\vw)(1-p_\star(\vw))]$, 
which becomes poorly conditioned near saturation~\cite{grosse2021taylor}.
FGN instead uses the scalar logit margin $s=z_\dagger-z_\star$. 
In this representation, 
the curvature factor is $p_\star p_\dagger$, 
the loss and gradient remain those of the original
softmax objective, 
and the retained curvature is exactly the true-vs-rest
component identified in Theorem~\ref{thm:fgn_ggn_decomposition}.

\subsection{Batched form and row-space structure}
\label{sec:batched_form}

The single-example identities above assemble cleanly at the mini-batch level.
For a mini-batch $\calB=\{(\vx_i,\vy_i)\}_{i=1}^b$, 
define
\begin{align}
\Ls(\vw;\calB)
&=
\frac{1}{b}\sum_{i=1}^b \ell(\vw;\vx_i,\vy_i),
\label{eq:fgn_batch_loss_def}
\end{align}
and write $q_i(\vw)\coloneqq p_\star^{(i)}(\vw)p_\dagger^{(i)}(\vw)$.
Collect the per-example quantities into
\begin{align*}
\mJ(\vw)
&\coloneqq
\begin{bmatrix}
\nabla_{\vw}s_1(\vw)^\top\\
\vdots\\
\nabla_{\vw}s_b(\vw)^\top
\end{bmatrix},
&
\vr(\vw)
&\coloneqq
\begin{bmatrix}
p_\dagger^{(1)}(\vw)\\
\vdots\\
p_\dagger^{(b)}(\vw)
\end{bmatrix},
&
\mQ(\vw)
&\coloneqq
\diag\!\bigl(
q_1(\vw),\dots,q_b(\vw)
\bigr).
\end{align*}
Then
\begin{align}
\nabla_{\vw}\Ls(\vw;\calB)
=
\frac{1}{b}\,\mJ(\vw)^\top \vr(\vw),
\qquad
\mH^{\mathrm{FGN}}(\vw;\calB)
=
\frac{1}{b}\,\mJ(\vw)^\top \mQ(\vw)\,\mJ(\vw).
\label{eq:fgn_batch_grad_h}
\end{align}
For finite logits, $\mQ(\vw)$ is diagonal and strictly positive, with
entries in $(0,1/4]$.

This batched factorization is the structural payoff of the decomposition.
The mini-batch gradient remains exact, while the curvature acts entirely
through the row space of $\mJ(\vw)$ with a diagonal positive middle factor
$\mQ(\vw)$.
Unlike the softmax covariance $\diag(\vp)-\vp\vp^\top$, which is singular
when $C>1$, the batch matrix $\mQ(\vw)$ is diagonal and strictly positive
for finite logits.
This is exactly the structure exploited by the whitened row-space solver
in Section~\ref{sec:solver}.

\section{Matrix-Free Row-Space Solver for FGN}
\label{sec:solver}

Starting from the batched factorization in~\eqref{eq:fgn_batch_grad_h},
the damped FGN step can be reduced to a $b$-dimensional row-space solve.
Whitening by $\mQ(\vw)^{1/2}$ gives a symmetric positive definite (SPD) system
that is directly compatible with matrix-free CG.
The row-space reduction introduces no additional curvature approximation: 
the only approximation relative to full softmax GGN is the
replacement of $\mH^{\mathrm{GGN}}$ by $\mH^{\mathrm{FGN}}$.

\subsection{From the damped parameter-space system to the whitened row-space system}
\label{sec:fgn_row_system}

Fix the current parameters $\vw$, mini-batch $\calB$, and damping $\lambda>0$, and let, for ease of notation,
\[
\mJ=\mJ(\vw),\qquad
\vr=\vr(\vw),\qquad
\mQ=\mQ(\vw),\qquad
\vg=\nabla_{\vw}\Ls(\vw;\calB),\qquad
\mH^{\mathrm{FGN}}=\mH^{\mathrm{FGN}}(\vw;\calB).
\]
Using \eqref{eq:fgn_batch_grad_h},
the damped FGN step is defined by
\begin{align}
\bigl(\mH^{\mathrm{FGN}}+\lambda \mI_{d} \bigr)\vd
&=
-\vg.
\label{eq:fgn_damped_system_param}
\end{align}
Moreover,
\[
\vg \in \mathrm{range}(\mJ^\top),
\qquad
\mathrm{range}(\mH^{\mathrm{FGN}})\subseteq \mathrm{range}(\mJ^\top).
\]
Therefore, for any $\lambda>0$, 
the unique solution of~\eqref{eq:fgn_damped_system_param} 
also lies in $\mathrm{range}(\mJ^\top)$.
We parameterize the direction as
\begin{align}
\vd
=
-\mJ^\top \vdelta,
\qquad
\vdelta \in \R^b.
\label{eq:fgn_row_direction_form}
\end{align}
Substituting \eqref{eq:fgn_row_direction_form} into
\eqref{eq:fgn_damped_system_param}, and using 
$\mH^{\mathrm{FGN}}=(1/b)\mJ^\top\mQ \mJ$, 
gives
\begin{align}
\bigl(\mH^{\mathrm{FGN}}+\lambda\mI_{d} \bigr)\vd
&=
-\frac{1}{b}\,\mJ^\top
\bigl(\mQ \mJ \mJ^\top + b\,\lambda \mI_b\bigr)\vdelta.
\end{align}
Since $\vg=(1/b)\mJ^\top\vr$, 
it is sufficient to solve the unwhitened
row-space system
\begin{align}
\bigl(\mQ \mJ \mJ^\top + b\,\lambda \mI_b\bigr)\vdelta
&=
\vr.
\label{eq:fgn_row_system_unwhitened}
\end{align}
This system is generally non-symmetric. 
Since $\mQ$ is diagonal with entries
$q_i=p_\star^{(i)}p_\dagger^{(i)}\in(0,1/4]$, 
$\mQ^{\pm 1/2}$ are well defined and cheap to apply.
Define the whitened quantities
\begin{align}
\tilde{\mJ}
&\coloneqq
\mQ^{1/2}\mJ,
&
\tilde{\vr}
&\coloneqq
\mQ^{-1/2}\vr,
&
\vu
&\coloneqq
\mQ^{-1/2}\vdelta,
&
\mK
&\coloneqq
\tilde{\mJ}\tilde{\mJ}^\top
\in\R^{b\times b}.
\end{align}
Then the equivalent whitened system is
\begin{align}
\bigl(\mK+b\,\lambda\mI_b\bigr)\vu
&=
\tilde{\vr}.
\label{eq:fgn_row_system_whitened}
\end{align}
Since $\mK \succeq 0$ and $\lambda>0$,
the operator in \eqref{eq:fgn_row_system_whitened}
is symmetric positive definite and directly compatible with CG.
The parameter-space direction is recovered by
\begin{align}
\vd
=
-\tilde{\mJ}^\top\vu
=
-\mJ^\top\mQ^{1/2}\vu.
\label{eq:fgn_direction_from_u}
\end{align}
Indeed, if \eqref{eq:fgn_row_system_whitened} holds, then
\[
\bigl(\mH^{\mathrm{FGN}}+\lambda\mI_{d} \bigr)(-\tilde{\mJ}^\top\vu)
=
-\frac{1}{b}\tilde{\mJ}^\top
\bigl(\mK+b\lambda\mI_b\bigr)\vu
=
-\frac{1}{b}\tilde{\mJ}^\top\tilde{\vr}
=
-\vg.
\]
Thus solving the whitened row-space system and backprojecting 
gives the exact damped FGN direction. 
Appendix~\ref{apx:sec4_equivalence} gives the full
parameter-space / row-space equivalence proof.

\subsection{Matrix-free operator and cost}
\label{sec:fgn_complexity}

CG only requires products with the whitened row-space operator
\begin{align}
\mB\vv
&\coloneqq
\bigl(\mK+b\,\lambda\mI_b\bigr)\vv
=
\tilde{\mJ}\bigl(\tilde{\mJ}^\top\vv\bigr)
+
b\,\lambda\vv,
\qquad
\vv\in\R^b .
\label{eq:fgn_row_operator_whitened}
\end{align}
The matrix $\mB$ is never materialized. One application can be implemented as
\begin{align}
\va &= \mQ^{1/2}\vv,
&
\vbeta &= \mJ^\top \va,
&
\vgamma &= \mJ \vbeta,
&
\mB \vv &= \mQ^{1/2}\vgamma + b\,\lambda \vv .
\end{align}
Thus each operator application uses one vector-Jacobian product (VJP) 
and one Jacobian-vector product (JVP) 
with the batched true-vs-rest margin map, 
plus $\mathcal{O}(b)$ vector operations. 
After CG terminates, 
the direction~\eqref{eq:fgn_direction_from_u} is recovered by one additional VJP.

In autodiff implementations, 
the batched margin map can be linearized once at the current parameters, 
and the resulting JVP/VJP closures can be reused across CG iterations. 
With at most $N_{\mathrm{CG}}$ CG iterations, 
the additional curvature solve uses $N_{\mathrm{CG}}$ 
applications of~\eqref{eq:fgn_row_operator_whitened}, 
hence $N_{\mathrm{CG}}$ JVPs, $N_{\mathrm{CG}}$ VJPs, 
$\mathcal{O}(N_{\mathrm{CG}}b)$ vector operations,
and one final VJP for backprojection. 
No $d\times d$ matrix is formed. 
A dense direct row-space solve would require forming the 
$b\times b$ matrix $\mK+b\lambda\mI_b$ 
and paying $\mathcal{O}(b^3)$ for a factorization,
whereas matrix-free CG only uses repeated operator applications.

The computational gain is localized to the curvature-product path: 
FGN uses scalar-margin Jacobian products 
and diagonal reweighting instead of applying
the full $C\times C$ softmax covariance.
Algorithm~\ref{alg:fgn_core} in Appendix~\ref{apx:fgn_algorithm} 
summarizes the core damped FGN step,
and Appendix~\ref{apx:solver_cost} sketches the corresponding cost accounting.

\subsection{Solver interpretation and convergence properties}
\label{sec:fgn_convergence}

Let
\[
\mA\coloneqq \mH^{\mathrm{FGN}}+\lambda \mI_{d},
\qquad
\mP\coloneqq \mA^{-1}.
\]
With an exact inner solve, \eqref{eq:fgn_damped_system_param} gives
$\vd=-\mP\vg$, so the outer update
$\vw^+=\vw+\lr\vd$ 
is stochastic preconditioned gradient descent with the FGN preconditioner. 
The row-space system is only the computational route used to
apply $\mP$ without forming $\mA$.

With truncated CG, the same viewpoint holds with an inexact preconditioner
application. If
\[
\ve
\coloneqq
\bigl(\mK+b\lambda\mI_b\bigr)\vu-\tilde{\vr},
\qquad
\vd\coloneqq-\tilde{\mJ}^\top\vu,
\]
then the induced residual in the original damped parameter-space system obeys
\begin{align}
\bigl(\mH^{\mathrm{FGN}}+\lambda\mI_d\bigr)\vd+\vg
&=
-\frac{1}{b}\tilde{\mJ}^\top\ve,
&
\bigl\|
\bigl(\mH^{\mathrm{FGN}}+\lambda\mI_d\bigr)\vd+\vg
\bigr\|
&\le
\frac{1}{b}\|\tilde{\mJ}\|\,\|\ve\|.
\label{eq:fgn_row_to_param_residual_bound}
\end{align}
As a result, an exact row-space solve ($\ve=\vzero$)
recovers the exact damped FGN step, 
while a truncated solve 
induces a controlled parameter-space residual. 
Under fixed damping, 
independent gradient and curvature mini-batches, smoothness and
bounded-spectrum assumptions, and sufficiently accurate inner solves, 
this fits a standard stochastic preconditioned gradient / inexact-solve 
template \cite{bottou2018optimization,li2017preconditioned,baey2023efficient,dembo1982inexact,bollapragada2019exact}.
Appendices~\ref{apx:sec4_convergence}--\ref{apx:sec4_inexact} give the
derivation, 
a representative simplified theorem, and the corresponding
inexact-solve extension.
This theorem is a simplified model of the preconditioned update: practical
implementations may use shared mini-batches, adaptive damping, momentum, or
curvature reuse.

\section{Experiments}
\label{sec:exp}

The experiments test the consequences of
Theorem~\ref{thm:fgn_ggn_decomposition}. 
The theorem identifies the gap
between full softmax GGN and FGN as a PSD within-competitor covariance term,
with dropped logit-space trace $p_\dagger\xi$. 
We therefore test three predictions: 
full softmax GGN incurs additional class-dependent curvature cost, 
the dropped term grows with competitor dispersion, 
and damped FGN steps agree best with full-GGN steps 
when the dropped term is small or damping is large. 
We then evaluate FGN in a frozen-feature affine-head benchmark 
where
full softmax GGN equals the exact Hessian of the optimized objective.
Implementation details are in Appendix~\ref{apx:experiment_details}.

\subsection{Mechanism checks}
\label{sec:exp_mechanism_checks}

\begin{figure}[t]
\centering
\includegraphics[width=0.98\textwidth]{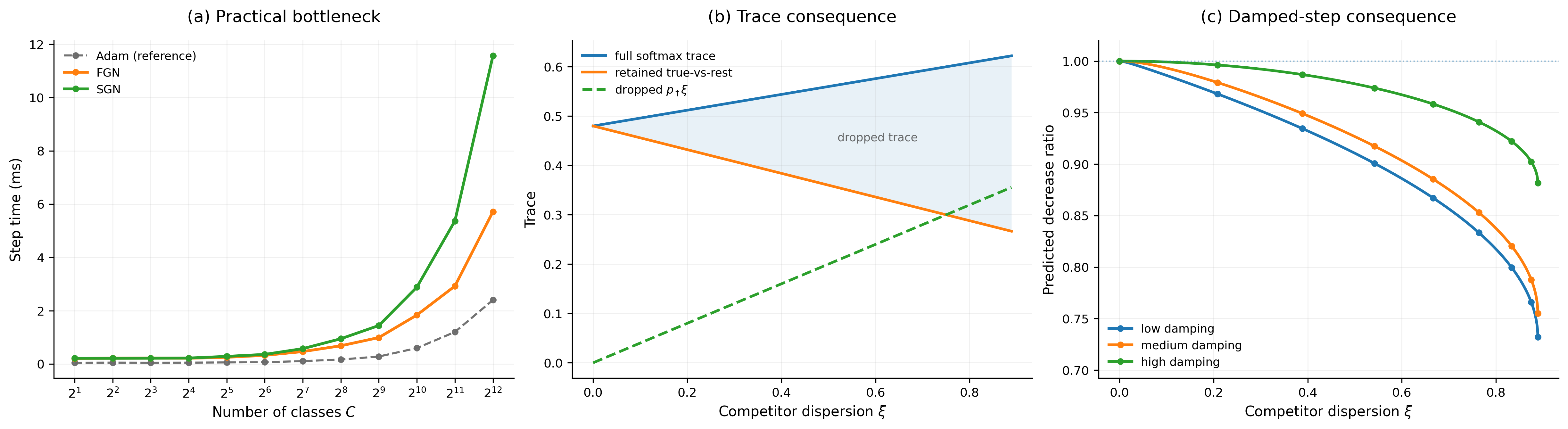}
\caption{
\textbf{Mechanism checks for FGN.}
(a) Steady-state update time as the number of classes grows.
(b) Output-space trace decomposition; 
the shaded gap is the dropped trace $p_\dagger\xi$.
(c) Fraction of the full-GGN damped quadratic decrease retained by the FGN
step as competitor dispersion varies.
}
\label{fig:triptych}
\end{figure}

Figure~\ref{fig:triptych} reads the theorem from implemented cost to
output-space algebra to local step quality. 
Panel~(a) compares JIT-compiled steady-state update times
for Adam, SGN, and FGN as the number of classes grows.
Adam~\cite{kingma2014adam} gives a first-order reference cost. SGN and FGN use the same maximum CG iteration budget, but differ in the curvature model: 
FGN applies the retained true-vs-rest curvature,
while SGN applies the full softmax GGN. 
The observed ordering is as expected: 
Adam is cheapest, FGN is intermediate, and SGN becomes
substantially more expensive at large $C$.
The timing gain comes from the second-order part of the
update: FGN still computes logits and probabilities, 
but avoids applying the
full $C\times C$ softmax covariance inside the curvature solve.

Panel~(b) isolates the output-space identity 
from Theorem~\ref{thm:fgn_ggn_decomposition}. 
The shaded gap is the dropped trace $p_\dagger\xi$. 
When competitor mass is concentrated, $\xi$ is small and the
dropped covariance is small. 
As the competitor distribution spreads across
classes, 
the within-competitor covariance grows. 
After projection to
parameter space, the gap further depends on the competitor Jacobian geometry
through~\eqref{eq:fgn_parameter_space_gap}.

Panel~(c) asks whether the trace gap in panel~(b) translates into a large
step-level gap. 
In a two-dimensional linear-logit construction, both the FGN
and full-GGN directions are evaluated under the same full-GGN damped quadratic
model.
The plotted ratio is the damped quadratic decrease retained by the FGN step,
normalized by the decrease of the full-GGN step; a value of one means
that FGN matches the full-GGN step under this local model. 
The ratio is close to one when competitor mass is concentrated and
decreases as $\xi$ grows. 
Larger damping keeps the ratio closer to one,
consistent with suppressing the effect of the omitted PSD term. 
Detailed construction and formulas for all three panels are in Appendix~\ref{apx:experiment_details}.

Together, the three panels show the intended tradeoff.
Panel~(b) makes the dropped output-space residual visible: 
its trace can grow as competitor mass spreads. 
Panel~(c) shows that this algebraic gap 
need not translate into a large damped-step degradation 
in the controlled construction, 
especially when damping is stronger. 
Panel~(a) shows the corresponding benefit:
FGN avoids the class-coupled covariance product inside the curvature solve.
Thus the relevant question is not whether FGN drops curvature, 
but when the dropped PSD term materially changes the damped step.

\subsection{Frozen-feature multiclass head optimization}
\label{sec:exp_frozen_head}

\begin{figure}[t]
\centering
\includegraphics[width=0.7\textwidth]{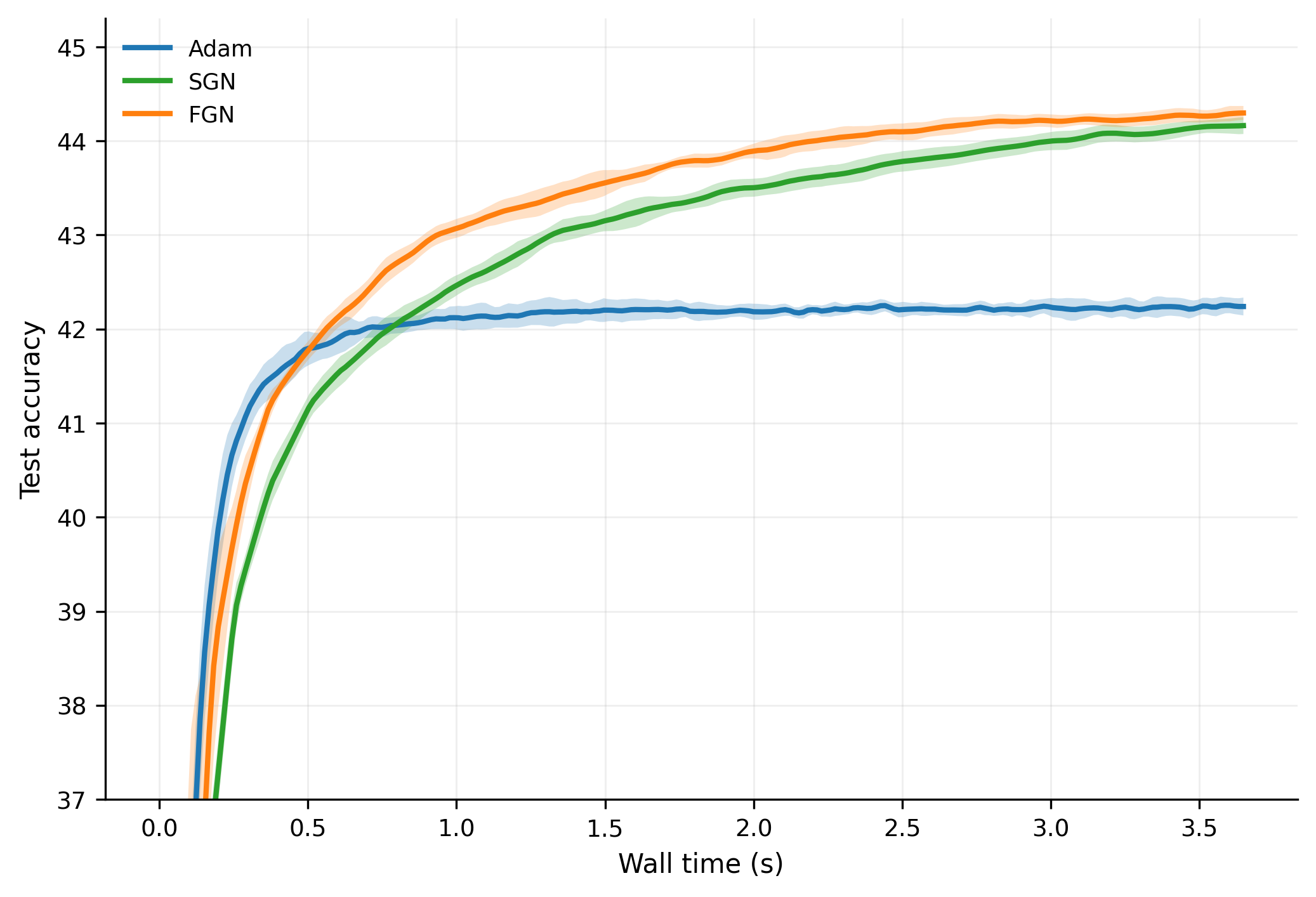}
\caption{
\textbf{Frozen-feature head optimization on Cars196.}
Test accuracy versus head-optimization wall time for Adam, SGN, and FGN.
Bands show $\pm 1$ standard deviation over 10 seeds.
}
\label{fig:headopt_cars196}
\end{figure}

We use a frozen-feature affine-head setting to isolate the curvature model.
Specifically,
we use Cars196~\cite{KrauseStarkDengFei-Fei_3DRR2013}, 
a fine-grained $196$-class car classification dataset, 
with a frozen ImageNet-pretrained ResNet-50~\cite{he2016deep} encoder. 
Each image is mapped once to a deterministic $2048$-dimensional
pre-classifier feature vector, 
and only an affine softmax head
$\vz_i = \mW^\top \vh_i + \vb$ is trained,
where $\vh_i$ is fixed 
and $\mW,\vb$ are optimized. 
Since the logits are affine in the optimized parameters, 
the full softmax GGN is the exact Hessian of the head objective. 
SGN therefore gives a strong full-curvature reference.
FGN uses the retained true-vs-rest curvature from
Theorem~\ref{thm:fgn_ggn_decomposition}; in this affine-head setting, its
row-space solve uses the closed-form scalar-margin row Gram available for
affine logits, specializing the implementation but not the curvature model
(Appendix~\ref{apx:fig2_headopt_details}).

Figure~\ref{fig:headopt_cars196} compares Adam, SGN, and FGN. 
Timing starts after feature extraction and excludes encoder evaluation. 
Hyperparameters are selected on a stratified validation split 
and then fixed for final test evaluation. 
At the end of the shared horizon, 
Adam reaches $42.24\pm0.09\%$ test accuracy,
SGN reaches $44.16\pm0.09\%$, 
and FGN reaches $44.29\pm0.07\%$. 
FGN also reaches the 43\% and 44\% accuracy thresholds faster than SGN.
Thus, 
in this affine-head setting where full GGN is the
exact-Hessian reference, 
FGN matches the full-curvature baseline in final
accuracy while reaching the same accuracy levels faster.
The full numerical summary is reported in Appendix~\ref{apx:fig2_headopt_details}.

\section{Conclusion}
\label{sec:outro}

We introduced Fast Gauss-Newton (FGN),
a structured approximation to multiclass softmax GGN.
FGN is based on an exact decomposition of the GGN
into a retained true-vs-rest term 
and a PSD within-competitor covariance residual. 
The retained curvature preserves the
original softmax loss and gradient, 
yields a batch row-space system solvable by matrix-free conjugate gradient, 
and is exact for binary classification.
The approximation is regime-dependent: 
it is strongest when competitor mass is concentrated or damping suppresses the dropped PSD term, 
and weakest when
probability mass is spread across several competitors. 
The experiments test these predictions and show that, 
in a fixed-feature affine-head setting, 
the retained curvature can track a 
full-GGN reference while using a cheaper solve.
A natural next step is to extend the 
true-vs-rest model 
with low-rank approximations
of the within-competitor covariance.

\bibliographystyle{plainnat}
\bibliography{references}

@book{nocedal2006numerical,
  title={Numerical Optimization},
  author={Nocedal, J. and Wright, S.},
  isbn={9780387400655},
  lccn={2006923897},
  series={Springer Series in Operations Research and Financial Engineering},
  year={2006},
  publisher={Springer New York}
}

@article{grosse2021taylor,
  title={Taylor Approximations},
  author={Grosse, Roger},
  journal={Neural Network Training Dynamics. Lecture Notes, University of Toronto},
  year={2021}
}

@incollection{robbins1971convergence,
  title={A convergence theorem for non negative almost supermartingales and some applications},
  author={Robbins, Herbert and Siegmund, David},
  booktitle={Optimizing methods in statistics},
  pages={233--257},
  year={1971},
  publisher={Elsevier}
}

@article{kingma2014adam,
  title={Adam: A method for stochastic optimization},
  author={Kingma, Diederik P and Ba, Jimmy},
  journal={arXiv preprint arXiv:1412.6980},
  year={2014}
}

@article{bottou2018optimization,
  title={Optimization methods for large-scale machine learning},
  author={Bottou, L{\'e}on and Curtis, Frank E and Nocedal, Jorge},
  journal={SIAM review},
  volume={60},
  number={2},
  pages={223--311},
  year={2018},
  publisher={SIAM}
}

@inproceedings{botev2017practical,
  title={Practical {G}auss-{N}ewton optimisation for deep learning},
  author={Botev, Aleksandar and Ritter, Hippolyt and Barber, David},
  booktitle={International Conference on Machine Learning},
  pages={557--565},
  year={2017},
  organization={PMLR}
}

@article{gargiani2020promise,
  title={On the Promise of the Stochastic Generalized {G}auss-{N}ewton Method for Training {DNNs}},
  author={Gargiani, Matilde and Zanelli, Andrea and Diehl, Moritz and Hutter, Frank},
  journal={arXiv preprint arXiv:2006.02409},
  year={2020}
}

@article{sagun2017empirical,
	title={Empirical analysis of the hessian of over-parametrized neural networks},
	author={Sagun, Levent and Evci, Utku and Guney, V Ugur and Dauphin, Yann and Bottou, Leon},
	journal={arXiv preprint arXiv:1706.04454},
	year={2017}
}

@article{ren2019efficient,
  title={Efficient subsampled Gauss-Newton and natural gradient methods for training neural networks},
  author={Ren, Yi and Goldfarb, Donald},
  journal={arXiv preprint arXiv:1906.02353},
  year={2019}
}

@inproceedings{martens2010deep,
  title={Deep learning via {H}essian-free optimization.},
  author={Martens, James and others},
  booktitle={ICML},
  volume={27},
  pages={735--742},
  year={2010}
}

@inproceedings{martens2011learning,
  title={Learning recurrent neural networks with {H}essian-free optimization},
  author={Martens, James and Sutskever, Ilya},
  booktitle={Proceedings of the 28th international conference on machine learning (ICML-11)},
  pages={1033--1040},
  year={2011}
}

@article{martens2020new,
  title={New insights and perspectives on the natural gradient method},
  author={Martens, James},
  journal={Journal of Machine Learning Research},
  volume={21},
  number={146},
  pages={1--76},
  year={2020}
}

@inproceedings{wiesler2013investigations,
  title={Investigations on {H}essian-free optimization for cross-entropy training of deep neural networks.},
  author={Wiesler, Simon and Li, Jinyu and Xue, Jian},
  booktitle={Interspeech},
  pages={3317--3321},
  year={2013}
}

@inproceedings{martens2015optimizing,
  title={Optimizing neural networks with kronecker-factored approximate curvature},
  author={Martens, James and Grosse, Roger},
  booktitle={International conference on machine learning},
  pages={2408--2417},
  year={2015},
  organization={PMLR}
}

@article{kiros2013training,
  title={Training neural networks with stochastic {H}essian-free optimization},
  author={Kiros, Ryan},
  journal={arXiv preprint arXiv:1301.3641},
  year={2013}
}

@article{bollapragada2019exact,
  title={Exact and inexact subsampled {N}ewton methods for optimization},
  author={Bollapragada, Raghu and Byrd, Richard H and Nocedal, Jorge},
  journal={IMA Journal of Numerical Analysis},
  volume={39},
  number={2},
  pages={545--578},
  year={2019},
  publisher={Oxford University Press}
}

@article{dembo1982inexact,
  title={Inexact newton methods},
  author={Dembo, Ron S and Eisenstat, Stanley C and Steihaug, Trond},
  journal={SIAM Journal on Numerical analysis},
  volume={19},
  number={2},
  pages={400--408},
  year={1982},
  publisher={SIAM}
}

@article{schraudolph2002fast,
  title={Fast curvature matrix-vector products for second-order gradient descent},
  author={Schraudolph, Nicol N},
  journal={Neural computation},
  volume={14},
  number={7},
  pages={1723--1738},
  year={2002},
  publisher={MIT Press}
}

@inproceedings{sankar2021deeper,
  title={A deeper look at the {H}essian eigenspectrum of deep neural networks and its applications to regularization},
  author={Sankar, Adepu Ravi and Khasbage, Yash and Vigneswaran, Rahul and Balasubramanian, Vineeth N},
  booktitle={Proceedings of the AAAI Conference on Artificial Intelligence},
  volume={35},
  number={11},
  pages={9481--9488},
  year={2021}
}

@article{papyan2018full,
  title={The full spectrum of deep net {H}essians at scale: Dynamics with sample size},
  author={Papyan, Vardan},
  journal={arXiv preprint arXiv:1811.07062},
  year={2018}
}

@article{amari1998natural,
  title={Natural gradient works efficiently in learning},
  author={Amari, Shun-Ichi},
  journal={Neural computation},
  volume={10},
  number={2},
  pages={251--276},
  year={1998},
  publisher={MIT Press}
}

@article{liu2023sophia,
  title={Sophia: A Scalable Stochastic Second-order Optimizer for Language Model Pre-training},
  author={Liu, Hong and Li, Zhiyuan and Hall, David and Liang, Percy and Ma, Tengyu},
  journal={arXiv preprint arXiv:2305.14342},
  year={2023}
}

@inproceedings{yao2021adahessian,
  title={Adahessian: An adaptive second order optimizer for machine learning},
  author={Yao, Zhewei and Gholami, Amir and Shen, Sheng and Mustafa, Mustafa and Keutzer, Kurt and Mahoney, Michael},
  booktitle={proceedings of the AAAI conference on artificial intelligence},
  volume={35},
  number={12},
  pages={10665--10673},
  year={2021}
}

@inproceedings{KrauseStarkDengFei-Fei_3DRR2013,
title = {3D Object Representations for Fine-Grained Categorization},
booktitle = {4th International IEEE Workshop on  3D Representation and Recognition (3dRR-13)},
year = {2013},
address = {Sydney, Australia},
author = {Jonathan Krause and Michael Stark and Jia Deng and Li Fei-Fei}
}

@inproceedings{he2016deep,
  title={Deep residual learning for image recognition},
  author={He, Kaiming and Zhang, Xiangyu and Ren, Shaoqing and Sun, Jian},
  booktitle={Proceedings of the IEEE conference on computer vision and pattern recognition},
  pages={770--778},
  year={2016}
}

@article{korbit2025exact,
  title={Exact gauss-newton optimization for training deep neural networks},
  author={Korbit, Mikalai and Adeoye, Adeyemi D and Bemporad, Alberto and Zanon, Mario},
  journal={Neurocomputing},
  pages={131738},
  year={2025},
  publisher={Elsevier}
}

@inproceedings{gupta2018shampoo,
  title={Shampoo: Preconditioned stochastic tensor optimization},
  author={Gupta, Vineet and Koren, Tomer and Singer, Yoram},
  booktitle={International Conference on Machine Learning},
  pages={1842--1850},
  year={2018},
  organization={PMLR}
}

@article{pearlmutter1994fast,
  title={Fast exact multiplication by the Hessian},
  author={Pearlmutter, Barak A},
  journal={Neural computation},
  volume={6},
  number={1},
  pages={147--160},
  year={1994},
  publisher={MIT Press}
}

@article{li2017preconditioned,
  title={Preconditioned stochastic gradient descent},
  author={Li, Xi-Lin},
  journal={IEEE transactions on neural networks and learning systems},
  volume={29},
  number={5},
  pages={1454--1466},
  year={2017},
  publisher={IEEE}
}

@inproceedings{baey2023efficient,
  title={Efficient preconditioned stochastic gradient descent for estimation in latent variable models},
  author={Baey, Charlotte and Delattre, Maud and Kuhn, Estelle and Leger, Jean-Benoist and Lemler, Sarah},
  booktitle={International Conference on Machine Learning},
  pages={1430--1453},
  year={2023},
  organization={PMLR}
}

@article{liao2021hessian,
  title={Hessian eigenspectra of more realistic nonlinear models},
  author={Liao, Zhenyu and Mahoney, Michael W},
  journal={Advances in Neural Information Processing Systems},
  volume={34},
  pages={20104--20117},
  year={2021}
}

@misc{xie2022power,
      title={On the Power-Law Hessian Spectrums in Deep Learning}, 
      author={Zeke Xie and Qian-Yuan Tang and Yunfeng Cai and Mingming Sun and Ping Li},
      year={2022},
      eprint={2201.13011},
      archivePrefix={arXiv},
      primaryClass={cs.LG},
      url={https://arxiv.org/abs/2201.13011}, 
}

@article{hestenes1952methods,
  title={Methods of conjugate gradients for solving linear systems},
  author={Hestenes, Magnus R and Stiefel, Eduard and others},
  journal={Journal of research of the National Bureau of Standards},
  volume={49},
  number={6},
  pages={409--436},
  year={1952}
}

@inproceedings{chen2016strategies,
  title={Strategies for training large vocabulary neural language models},
  author={Chen, Wenlin and Grangier, David and Auli, Michael},
  booktitle={Proceedings of the 54th Annual Meeting of the Association for Computational Linguistics (Volume 1: Long Papers)},
  pages={1975--1985},
  year={2016}
}

@inproceedings{daghaghi2021tale,
  title={A tale of two efficient and informative negative sampling distributions},
  author={Daghaghi, Shabnam and Medini, Tharun and Meisburger, Nicholas and Chen, Beidi and Zhao, Mengnan and Shrivastava, Anshumali},
  booktitle={International conference on machine learning},
  pages={2319--2329},
  year={2021},
  organization={PMLR}
}

@article{allwein2000reducing,
  title={Reducing multiclass to binary: A unifying approach for margin classifiers},
  author={Allwein, Erin L and Schapire, Robert E and Singer, Yoram},
  journal={Journal of machine learning research},
  volume={1},
  number={Dec},
  pages={113--141},
  year={2000}
}

@article{rifkin2004defense,
  title={In defense of one-vs-all classification},
  author={Rifkin, Ryan and Klautau, Aldebaro},
  journal={Journal of machine learning research},
  volume={5},
  number={Jan},
  pages={101--141},
  year={2004}
}

@inproceedings{bengio2003quick,
  title={Quick training of probabilistic neural nets by importance sampling},
  author={Bengio, Yoshua and Sen{\'e}cal, Jean-S{\'e}bastien},
  booktitle={International Workshop on Artificial Intelligence and Statistics},
  pages={17--24},
  year={2003},
  organization={PMLR}
}

\appendix

% APPENDIX SECTIONS
\section{Related Work}\label{apx:related_work}

This appendix reviews prior work through the lens 
of the paper's central technical choice:
FGN modifies the \emph{output-space} curvature model for softmax cross-entropy 
(via an exact true-vs-rest margin reformulation 
and a decomposition of the softmax GGN), 
and only then builds a row-space solver around the resulting batched form
$\mH^{\mathrm{FGN}}(\vw;\calB)=\frac{1}{b}\mJ^\top \mQ \mJ$.

\paragraph{Gauss-Newton, generalized Gauss-Newton, and Fisher curvature.}
Gauss-Newton and the generalized Gauss-Newton matrix 
are standard PSD
surrogates for the Hessian obtained by linearizing 
the model and retaining only the curvature
coming from the loss in \emph{output space}~\cite{schraudolph2002fast,bottou2018optimization}.
For log-likelihood losses in exponential-family models, 
this output-space curvature is closely
tied to the Fisher information matrix, 
which also underlies natural-gradient methods from
information geometry~\cite{amari1998natural,martens2020new}.
This Fisher/GGN perspective is central to scalable second-order optimizers 
in deep learning:
it motivates replacing the indefinite Hessian by a PSD curvature proxy, 
and it clarifies why
damping or trust-region style regularization is typically required 
for robustness in practice
(e.g., via $(\mH+\lambda \mI)\vd=-\vg$)~\cite{martens2020new,nocedal2006numerical}.
In contrast to much prior work which accepts 
the standard softmax output curvature model 
and focuses on approximating/inverting the resulting parameter-space curvature, 
FGN changes the output-space
curvature factor itself while keeping the training loss and gradient exact.

\paragraph{Softmax cross-entropy curvature and the output-space bottleneck.}
For multiclass softmax cross-entropy, 
the canonical logit-space curvature factor is the
$C\times C$ softmax covariance $\diag(\vp)-\vp\vp^\top$, 
leading to a per-example GGN term
$\mJ_z^\top(\diag(\vp)-\vp\vp^\top)\mJ_z$~\cite{schraudolph2002fast,martens2020new}.
For large $C$, this can dominate the additional curvature-product path:
the class count appears directly inside the output-space curvature factor, 
so generic matrix-free products with the full softmax GGN 
still interact with the
full multiclass output geometry.
The core contribution of this paper is an \emph{output-space} simplification 
that is specific to softmax cross-entropy: 
by reformulating the loss exactly as a scalar softplus of a true-vs-rest
margin and then decomposing the softmax GGN, 
we isolate a true-vs-rest margin term and a PSD covariance
term over competing logits. FGN retains the former and discards the latter. 
This differs from diagonal or structured approximations 
that keep the original softmax curvature model but compress the
resulting parameter-space curvature.

\paragraph{Output approximations and multiclass reductions.}
Large-output models are often scaled by changing the training computation
itself, 
for example through sampled or approximate output losses such as
importance sampling, sampled softmax, target sampling, or negative
sampling~\cite{bengio2003quick,chen2016strategies,daghaghi2021tale}.
Another approach reduces multiclass classification to collections of binary
problems, including one-vs-rest schemes~\cite{allwein2000reducing,rifkin2004defense}.
FGN is different from both families: it keeps the original softmax
cross-entropy loss and exact gradient, and approximates only the GGN curvature
factor.

\paragraph{Second-order methods in deep learning.}
A large literature studies second-order training methods whose main computational challenge is
building and inverting curvature approximations.
Hessian-free (HF) optimization and related truncated-Newton methods solve 
linear systems using
conjugate gradients and matrix-free curvature-vector products 
computed by automatic
differentiation~\cite{martens2010deep,martens2011learning,pearlmutter1994fast,hestenes1952methods}.
These methods typically use the exact Hessian or GN/GGN curvature 
and introduce damping to stabilize the solve.
Structured factorization approaches aim to retain richer geometry at lower cost.
K-FAC approximates the Fisher/GGN by layerwise Kronecker factors~\cite{martens2015optimizing},
while Shampoo exploits tensor-factor structure of parameters to build per-axis preconditioners~\cite{gupta2018shampoo}.
Recent diagonal-curvature methods such as AdaHessian and Sophia estimate 
(variants of) diagonal
Hessian information at low overhead~\cite{yao2021adahessian,liu2023sophia}.
FGN is complementary to these: 
it is not a new layerwise factorization of the same softmax
curvature, 
but a change to the \emph{multiclass output-space} curvature model that yields a
different batched Jacobian form before any inversion/solve strategy is chosen.

\paragraph{Stochastic Gauss-Newton and observation/row-space solvers.}
Practical deep-learning GN methods build tractable mini-batch GN/GGN approximations~\cite{botev2017practical}.
Given such approximations, stochastic GN methods typically solve the resulting
damped linear systems only approximately, often by CG or related stochastic
linear-algebra procedures~\cite{gargiani2020promise}.
A separate line of work exploits low-rank structure and observation-space
identities to obtain solves
that scale with batch size rather than parameter dimension
\cite{ren2019efficient,korbit2025exact}.
These methods demonstrate that row-space solvers can be
effective, but they typically retain the full multiclass output geometry in the
curvature model. 
FGN targets the orthogonal bottleneck: the true-vs-rest
decomposition gives one scalar-margin row per example, so the FGN row-space
system is $b$-dimensional rather than tied 
to the $bC$ output block associated with the full multiclass
curvature.

\paragraph{Empirical and theoretical structure of Hessian spectra.}
Empirical studies of deep-network Hessians report 
a characteristic
spiked-plus-bulk structure, with a small number of outlier eigenvalues separated
from a near-zero bulk~\cite{papyan2018full,sagun2017empirical}; 
related work
further studies layerwise and power-law spectral structure
\cite{sankar2021deeper,xie2022power,liao2021hessian}. 
These results motivate
structured curvature models broadly, 
but they are not assumptions behind FGN.
The FGN decomposition is instead loss-specific and algebraic: 
it separates a
true-vs-rest margin direction from within-competitor covariance 
in logit space.

\section{Derivations for the Exact Margin Reformulation}
\label{apx:margin_derivations}

This appendix verifies the identities used in Section~\ref{sec:margin}, 
namely
\eqref{eq:fgn_ce_loss}, \eqref{eq:fgn_phi_derivs},
and \eqref{eq:fgn_margin_logit_grad}.

\subsection{Exact scalar-margin representation}
\label{apx:exact_scalar_margin_form}

Starting from the single-example cross-entropy loss,
\begin{align}
\ell(\vw;\vx,\vy)
&=
-\log p_\star(\vw)
=
-\log \frac{e^{z_\star}}{\sum_{j=1}^C e^{z_j}},
\end{align}
split the denominator into the true-class term and the competing terms:
\begin{align}
\ell(\vw;\vx,\vy)
&=
-\log \frac{e^{z_\star}}{e^{z_\star}+\sum_{j\neq\star} e^{z_j}}
=
-z_\star + \log\Bigl(e^{z_\star}+\sum_{j\neq\star} e^{z_j}\Bigr).
\end{align}
Using $e^{z_\dagger}=\sum_{j\neq\star} e^{z_j}$, 
which follows from the definition of $z_\dagger$, we obtain
\begin{align}
\ell(\vw;\vx,\vy)
&=
-z_\star + \log\bigl(e^{z_\star}+e^{z_\dagger}\bigr)
=
-z_\star + \log\Bigl(e^{z_\star}\bigl(1+e^{z_\dagger-z_\star}\bigr)\Bigr)
\nonumber\\
&=
-z_\star + z_\star + \log\bigl(1+e^{z_\dagger-z_\star}\bigr)
=
\log\bigl(1+e^{z_\dagger-z_\star}\bigr).
\end{align}
Hence, with
\(
s(\vw)=z_\dagger(\vw)-z_\star(\vw)
\)
and
\(
\phi(s)=\log(1+e^s)
\),
the loss has the exact scalar form
\begin{align}
\ell(\vw;\vx,\vy)=\phi\bigl(s(\vw)\bigr).
\end{align}

\subsection{Derivatives of the scalar link}
\label{apx:scalar_link_derivatives}

Let
\begin{align}
\phi(s)=\log(1+e^s).
\end{align}
Then
\begin{align}
\phi'(s)
&=
\frac{e^s}{1+e^s},
&
\phi''(s)
&=
\frac{e^s}{(1+e^s)^2}.
\end{align}
Substituting $s=z_\dagger-z_\star$ yields
\begin{align}
\phi'(s)
&=
\frac{e^{z_\dagger-z_\star}}{1+e^{z_\dagger-z_\star}}
=
\frac{e^{z_\dagger}}{e^{z_\star}+e^{z_\dagger}}
=
p_\dagger,
\\
\phi''(s)
&=
\frac{e^{z_\dagger-z_\star}}{\bigl(1+e^{z_\dagger-z_\star}\bigr)^2}
=
\frac{e^{z_\star}e^{z_\dagger}}{\bigl(e^{z_\star}+e^{z_\dagger}\bigr)^2}
=
p_\star p_\dagger.
\end{align}

\subsection{Logit-space derivatives of $z_\dagger$ and $s$}
\label{apx:margin_logit_derivatives}

For convenience, order the logit coordinates as
\(
(z_\star,\{z_j\}_{j\neq\star})
\),
with $\vrho$ as defined in \eqref{eq:fgn_rho_def}.
Since
\(
z_\dagger=\log\sum_{j\neq\star} e^{z_j}
\)
does not depend on $z_\star$, we have
\begin{align}
\frac{\partial z_\dagger}{\partial z_\star}=0.
\end{align}
For $k\neq \star$,
\begin{align}
\frac{\partial z_\dagger}{\partial z_k}
&=
\frac{e^{z_k}}{\sum_{j\neq\star} e^{z_j}}
=
\frac{p_k}{p_\dagger}
=
\rho_k.
\end{align}
Therefore
\begin{align}
\nabla_{\vz} z_\dagger
&=
\begin{bmatrix}
0\\
\vrho
\end{bmatrix},
\qquad
\nabla_{\vz} s
=
\nabla_{\vz}(z_\dagger-z_\star)
=
\begin{bmatrix}
-1\\
\vrho
\end{bmatrix}.
\end{align}
By the chain rule,
\begin{align}
\nabla_{\vw} z_\dagger
&=
\sum_{j\neq\star}\rho_j\,\nabla_{\vw} z_j,
&
\nabla_{\vw} s
&=
-\nabla_{\vw} z_\star
+
\sum_{j\neq\star}\rho_j\,\nabla_{\vw} z_j.
\end{align}
Equivalently, the margin Jacobian row is
\begin{align}
\mJ_s
=
\nabla_{\vw}s^\top
=
-\nabla_{\vw} z_\star^\top
+
\sum_{j\neq\star}\rho_j\,\nabla_{\vw} z_j^\top.
\end{align}

\subsection{Equivalence of softmax and margin gradients}
\label{apx:softmax_margin_grad}

We show that the standard softmax cross-entropy gradient
\begin{align}
\nabla_{\vw}\ell(\vw;\vx,\vy)
=
\mJ_z(\vw)^\top\bigl(\vp(\vw)-\vy\bigr)
\end{align}
coincides exactly with the margin form
\begin{align}
\nabla_{\vw}\ell(\vw;\vx,\vy)
=
p_\dagger(\vw)\,\nabla_{\vw}s(\vw).
\end{align}

Using the reordered coordinates from Appendix~\ref{apx:margin_logit_derivatives},
the target and probability vectors have the block forms
\begin{align}
\vy
&=
\begin{bmatrix}
1\\
\vzero
\end{bmatrix},
\qquad
\vp
=
\begin{bmatrix}
p_\star\\
p_\dagger \vrho
\end{bmatrix}.
\end{align}

Therefore
\begin{align}
\vp-\vy
=
\begin{bmatrix}
p_\star-1\\
p_\dagger\vrho
\end{bmatrix}
=
p_\dagger
\begin{bmatrix}
-1\\
\vrho
\end{bmatrix}
=
p_\dagger \nabla_{\vz} s.
\label{eq:app_logit_grad_identity}
\end{align}
Applying the chain rule gives
\begin{align}
\nabla_{\vw}\ell(\vw;\vx,\vy)
&=
\phi'(s)\,\nabla_{\vw}s
=
p_\dagger\,\nabla_{\vw}s
=
p_\dagger\,\mJ_s^\top
=
p_\dagger\,\mJ_z^\top \nabla_{\vz} s
=
\mJ_z^\top(\vp-\vy),
\end{align}
where the last equality uses \eqref{eq:app_logit_grad_identity}. 
Equivalently,
\begin{align}
\nabla_{\vw}\ell(\vw;\vx,\vy)
=
p_\dagger\bigl(\nabla_{\vw} z_\dagger(\vw)-\nabla_{\vw} z_\star(\vw)\bigr).
\end{align}
Thus the scalar-margin reformulation preserves both the softmax
cross-entropy value and the standard softmax gradient exactly; 
the approximation introduced by FGN begins 
only at the Gauss-Newton curvature level.

\section{Derivations for the FGN Curvature}
\label{apx:fgn}

This appendix collects the derivations and proofs deferred from
Section~\ref{sec:fgn}.
All identities below use the reordered logit coordinates
$(z_\star,\{z_j\}_{j\neq\star})$ introduced in Section~\ref{sec:margin}.

\subsection{Two exact-Hessian decompositions}
\label{apx:fgn_hessian_decomp}

The exact Hessian can be written in two equivalent ways, corresponding
to the multiclass logit map $\vz(\vw)$ and the scalar-margin map $s(\vw)$.

First, view the loss as the composition
\(
\ell(\vw;\vx,\vy)=L(\vz(\vw))
\)
with
\(
L(\vz)=-\log p_\star
\).
Since
\(
\nabla_{\vz} L = \vp-\vy
\)
and
\(
\nabla_{\vz}^2 L = \diag(\vp)-\vp\vp^\top
\),
the chain rule gives
\begin{align}
\nabla_{\vw}^2 \ell
&=
\mJ_z^\top\bigl(\diag(\vp)-\vp\vp^\top\bigr)\mJ_z
+
\sum_{c=1}^C (p_c-y_c)\,\nabla_{\vw}^2 z_c.
\end{align}

Second, use the exact scalar form from Section~\ref{sec:margin},
\(
\ell(\vw;\vx,\vy)=\phi(s(\vw))
\),
with
\(
\phi'(s)=p_\dagger
\)
and
\(
\phi''(s)=p_\star p_\dagger
\).
Applying the scalar chain rule gives
\begin{align}
\nabla_{\vw}^2 \ell
&=
\phi''(s)\,\nabla_{\vw}s\,\nabla_{\vw}s^\top
+
\phi'(s)\,\nabla_{\vw}^2 s
=
p_\star p_\dagger\,\mJ_s^\top \mJ_s
+
p_\dagger\,\nabla_{\vw}^2 s.
\end{align}

These identities clarify the hierarchy
\[
\text{exact Hessian}
\;\longrightarrow\;
\text{multiclass GGN}
\;\longrightarrow\;
\text{FGN}.
\]
The first arrow discards second derivatives of the logit map.
The second arrow additionally removes part of the output-space curvature,
as characterized in Theorem~\ref{thm:fgn_ggn_decomposition}.

\subsection{Proof of Theorem~\ref{thm:fgn_ggn_decomposition}}
\label{apx:fgn_decomposition}

Recall the block form
\begin{align*}
\vp
&=
\begin{bmatrix}
p_\star \\
\vp_{-\star}
\end{bmatrix},
&
\vp_{-\star}
&=
p_\dagger \vrho.
\end{align*}
Expanding the softmax covariance in blocks gives
\begin{align}
\diag(\vp)-\vp\vp^\top
=
\begin{bmatrix}
p_\star-p_\star^2 & -p_\star \vp_{-\star}^\top \\
-p_\star \vp_{-\star} & \diag(\vp_{-\star})-\vp_{-\star}\vp_{-\star}^\top
\end{bmatrix}
=
\begin{bmatrix}
p_\star p_\dagger & -p_\star p_\dagger\,\vrho^\top \\
-p_\star p_\dagger\,\vrho & p_\dagger\diag(\vrho)-p_\dagger^2\vrho\vrho^\top
\end{bmatrix}.
\label{eq:apx_fgn_softmax_block}
\end{align}
Using
\(
p_\dagger-p_\dagger^2 = p_\star p_\dagger
\),
the lower-right block becomes
\begin{align}
p_\dagger\diag(\vrho)-p_\dagger^2\vrho\vrho^\top
&=
p_\star p_\dagger\,\vrho\vrho^\top
+
p_\dagger\bigl(\diag(\vrho)-\vrho\vrho^\top\bigr).
\label{eq:apx_fgn_lower_right}
\end{align}
Substituting \eqref{eq:apx_fgn_lower_right} into
\eqref{eq:apx_fgn_softmax_block} yields
\begin{align}
\diag(\vp)-\vp\vp^\top
&=
p_\star p_\dagger
\begin{bmatrix}
1 & -\vrho^\top \\
-\vrho & \vrho\vrho^\top
\end{bmatrix}
+
p_\dagger
\begin{bmatrix}
0 & 0 \\
0 & \diag(\vrho)-\vrho\vrho^\top
\end{bmatrix}.
\end{align}
Since
\(
\nabla_{\vz}s=
\begin{bmatrix}
-1\\
\vrho
\end{bmatrix},
\)
the first matrix is exactly
\(
\nabla_{\vz}s\,\nabla_{\vz}s^\top,
\)
which gives \eqref{eq:fgn_softmax_cov_decomposition}.
The residual is PSD 
since 
$\operatorname{diag}(\boldsymbol{\rho})-\boldsymbol{\rho}\boldsymbol{\rho}^\top$ 
is the covariance matrix of a categorical random variable with probabilities 
$\boldsymbol{\rho}$.

For the parameter-space identity, multiply on the left and right by
$\mJ_z^\top$ and $\mJ_z$:
\begin{align}
\mJ_z^\top\bigl(\diag(\vp)-\vp\vp^\top\bigr)\mJ_z
&=
p_\star p_\dagger\,\mJ_z^\top \nabla_{\vz}s\,\nabla_{\vz}s^\top \mJ_z
+
p_\dagger\,
\mJ_{-\star}^\top
\bigl(\diag(\vrho)-\vrho\vrho^\top\bigr)
\mJ_{-\star}.
\end{align}
Using
\(
\mJ_s = (\nabla_{\vz}s)^\top \mJ_z,
\)
the first term equals
\(
p_\star p_\dagger\,\mJ_s^\top \mJ_s = \mH^{\mathrm{FGN}},
\)
which yields \eqref{eq:fgn_ggn_decomposition}.

\subsection{Proof of Proposition~\ref{prop:fgn_decomposition_consequences}}
\label{apx:fgn_consequences}

We prove the items in order.

\paragraph{Item 1.}
Since $p_\star p_\dagger>0$ for finite logits and
$\mJ_s^\top\mJ_s\succeq 0$, we have
$\mH^{\mathrm{FGN}}\succeq 0$. The matrix
\(
\diag(\vrho)-\vrho\vrho^\top
\)
is the covariance matrix of a categorical distribution with probabilities
$\vrho$, and is therefore PSD. Hence the residual term in
\eqref{eq:fgn_ggn_decomposition} is PSD, which implies
\[
0 \preceq \mH^{\mathrm{FGN}} \preceq \mH^{\mathrm{GGN}}.
\]

\paragraph{Item 2.}
If $C=2$, then the competitor set has size one, so $\vrho=[1]$.
Therefore
\(
\diag(\vrho)-\vrho\vrho^\top = 0,
\)
and \eqref{eq:fgn_ggn_decomposition} reduces to
\(
\mH^{\mathrm{FGN}}=\mH^{\mathrm{GGN}}.
\)

\paragraph{Item 3.}
For any probability vector $\vrho$, the covariance matrix
\(
\diag(\vrho)-\vrho\vrho^\top
\)
vanishes if and only if the distribution is degenerate, that is,
if and only if $\vrho$ is one-hot.
Hence the residual in \eqref{eq:fgn_softmax_cov_decomposition} vanishes
if and only if $\vrho$ is one-hot, proving the logit-space exactness statement.

Sufficiency for parameter-space exactness is immediate from
\eqref{eq:fgn_ggn_decomposition}.
The converse need not hold in parameter space because multiplication by
$\mJ_{-\star}$ can annihilate a nonzero logit-space residual.
For example, if all competitor Jacobian rows are identical, then
\(
\mJ_{-\star}
\)
has rank at most one and
\(
\mJ_{-\star}^\top
(\diag(\vrho)-\vrho\vrho^\top)
\mJ_{-\star}=0
\)
for every $\vrho$, even when $\vrho$ is diffuse.

\paragraph{Item 4.}
Let $\mJ_1,\dots,\mJ_{C-1}$ denote the rows of $\mJ_{-\star}$, and define
\(
\bar{\mJ}_{\rho}\coloneqq \vrho^\top \mJ_{-\star}
=
\sum_{j=1}^{C-1}\rho_j \mJ_j.
\)
Then
\begin{align}
\mJ_{-\star}^\top\bigl(\diag(\vrho)-\vrho\vrho^\top\bigr)\mJ_{-\star}
&=
\sum_{j=1}^{C-1}\rho_j\,\mJ_j^\top \mJ_j
-
\bar{\mJ}_{\rho}^\top \bar{\mJ}_{\rho}.
\label{eq:apx_fgn_cov_expand}
\end{align}
On the other hand,
\begin{align}
\sum_{j=1}^{C-1}\rho_j\,
(\mJ_j-\bar{\mJ}_{\rho})^\top
(\mJ_j-\bar{\mJ}_{\rho})
&=
\sum_{j=1}^{C-1}\rho_j\,\mJ_j^\top\mJ_j
-
\bar{\mJ}_{\rho}^\top\bar{\mJ}_{\rho},
\end{align}
because $\sum_j \rho_j = 1$ and
$\sum_j \rho_j \mJ_j = \bar{\mJ}_{\rho}$.
Combining this identity with \eqref{eq:fgn_ggn_decomposition} proves
\eqref{eq:fgn_parameter_space_gap}.

\paragraph{Diagnostic $\xi$.}
The trace identity follows immediately:
\begin{align}
\Tr\bigl(\diag(\vrho)-\vrho\vrho^\top\bigr)
&=
\sum_{j=1}^{C-1}\rho_j - \sum_{j=1}^{C-1}\rho_j^2
=
1-\norm{\vrho}_2^2
=
\xi.
\end{align}
Because $\vrho$ is a probability vector on $C-1$ coordinates,
$\norm{\vrho}_2^2\in[1/(C-1),1]$, with the lower bound attained by the
uniform distribution and the upper bound attained by one-hot distributions.
Therefore
\begin{align}
0 \le \xi \le 1-\frac{1}{C-1}.
\end{align}
Finally, the dropped logit-space term in
\eqref{eq:fgn_softmax_cov_decomposition} has trace
\begin{align}
\Tr\left(
p_\dagger
\begin{bmatrix}
0 & 0 \\
0 & \diag(\vrho)-\vrho\vrho^\top
\end{bmatrix}
\right)
&=
p_\dagger\,\xi.
\end{align}
Hence, for fixed $p_\dagger$, the trace of the logit-space mismatch is
maximized by the most diffuse competitor distribution and vanishes when
the competitor distribution is one-hot.

\subsection{Batch gradient and curvature identities}
\label{apx:fgn_batch}

Let
\[
\Ls(\vw;\calB)=\frac{1}{b}\sum_{i=1}^b \ell(\vw;\vx_i,\vy_i).
\]
For each example $i$, Section~\ref{sec:margin} gives the exact gradient
\[
\nabla_{\vw}\ell(\vw;\vx_i,\vy_i)
=
p_\dagger^{(i)}\,\mJ_{s,i}^\top,
\qquad
\mJ_{s,i}(\vw)=\nabla_{\vw}s_i(\vw)^\top.
\]
Here the subscript $i$ indexes examples, not competitor classes.
Averaging over the batch yields
\begin{align}
\nabla_{\vw}\Ls(\vw;\calB)
&=
\frac{1}{b}\sum_{i=1}^b p_\dagger^{(i)}\,\mJ_{s,i}^\top
=
\frac{1}{b}\,\mJ(\vw)^\top \vr(\vw),
\end{align}
where
\(
r_i(\vw)=p_\dagger^{(i)}(\vw)
\)
and
\(
\mJ(\vw)
\)
stacks the rows $\mJ_{s,i}(\vw)$.

Likewise, the single-example FGN curvature is
\[
\mH_i^{\mathrm{FGN}}(\vw)
=
p_\star^{(i)}p_\dagger^{(i)}\,\mJ_{s,i}^\top \mJ_{s,i}.
\]
Averaging again gives
\begin{align}
\mH^{\mathrm{FGN}}(\vw;\calB)
&=
\frac{1}{b}\sum_{i=1}^b
p_\star^{(i)}p_\dagger^{(i)}\,\mJ_{s,i}^\top \mJ_{s,i}
=
\frac{1}{b}\,\mJ(\vw)^\top \mQ(\vw)\,\mJ(\vw),
\end{align}
where
\(
\mQ(\vw)=\diag(q_1(\vw),\dots,q_b(\vw))
\)
with
\(
q_i(\vw)=p_\star^{(i)}(\vw)p_\dagger^{(i)}(\vw).
\)
For finite logits, each $q_i(\vw)\in(0,1/4]$, so $\mQ(\vw)$ is diagonal and strictly positive.

\section{Derivations for the FGN Solver}
\label{apx:solver}

This appendix collects the auxiliary derivations for Section~\ref{sec:solver}.
It contains the exact parameter-space / row-space equivalence derivation,
the row-to-parameter residual identity, and one standard simplified
convergence template for the solver viewpoint.
These results are included only to formalize the interpretation from
Section~\ref{sec:fgn_convergence}; they are not guarantees for the full
practical implementation with shared mini-batches, adaptive damping,
momentum, or curvature reuse.

\subsection{Equivalence of parameter- and row-space solves}
\label{apx:sec4_equivalence}

\begin{lemma}
Assume $\lambda>0$.
Then the damped parameter-space system \eqref{eq:fgn_damped_system_param},
the unwhitened row-space system \eqref{eq:fgn_row_system_unwhitened},
and the whitened row-space system \eqref{eq:fgn_row_system_whitened}
are equivalent via the representations
\begin{align}
\vd = -\mJ^\top \vdelta = -\tilde{\mJ}^\top \vu.
\end{align}
In particular, solving \eqref{eq:fgn_row_system_whitened} and backprojecting with
\eqref{eq:fgn_direction_from_u} returns exactly the unique solution of
\eqref{eq:fgn_damped_system_param}.
\end{lemma}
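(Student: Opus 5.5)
The plan is to prove the equivalences in two directions, treating the parameter-space system \eqref{eq:fgn_damped_system_param} as the reference. Since $\lambda>0$ and $\mH^{\mathrm{FGN}}\succeq 0$, the matrix $\mA=\mH^{\mathrm{FGN}}+\lambda\mI_d$ is SPD. Hence \eqref{eq:fgn_damped_system_param} has a unique solution $\vd^\star$. Likewise $\mK+b\lambda\mI_b$ is SPD, so \eqref{eq:fgn_row_system_whitened} has a unique solution $\vu^\star$. For the unwhitened system I will use the similarity
\[
\mQ\mJ\mJ^\top+b\lambda\mI_b=\mQ^{1/2}\bigl(\mK+b\lambda\mI_b\bigr)\mQ^{-1/2},
\]
which is valid because $\mQ$ is diagonal with entries in $(0,1/4]$. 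This shows that \eqref{eq:fgn_row_system_unwhitened} is nonsingular, with unique solution $\vdelta^\star=\mQ^{1/2}\vu^\star$. The same identity also gives directly that $\vdelta$ solves \eqref{eq:fgn_row_system_unwhitened} if and only if $\vu=\mQ^{-1/2}\vdelta$ solves \eqref{eq:fgn_row_system_whitened}. For the right-hand side I check that $\mQ^{-1/2}\vr=\tilde{\vr}$, which holds by definition. The two representations of $\vd$ then agree, since $\mJ^\top\vdelta=\mJ^\top\mQ^{1/2}\vu=\tilde{\mJ}^\top\vu$.

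The forward direction (row-space solution gives the parameter-space solution) is the computation already displayed in Section~\ref{sec:fgn_row_system}. Write $\mH^{\mathrm{FGN}}=\frac1b\tilde{\mJ}^\top\tilde{\mJ}$ and use the push-through identity $\tilde{\mJ}^\top\tilde{\mJ}\tilde{\mJ}^\top=\tilde{\mJ}^\top\mK$. This yields
\[
\mA(-\tilde{\mJ}^\top\vu)=-\tfrac1b\tilde{\mJ}^\top(\mK+b\lambda\mI_b)\vu=-\tfrac1b\tilde{\mJ}^\top\tilde{\vr}=-\tfrac1b\mJ^\top\vr=-\vg,
\]
where the last two equalities use $\tilde{\mJ}^\top\tilde{\vr}=\mJ^\top\mQ^{1/2}\mQ^{-1/2}\vr$ and \eqref{eq:fgn_batch_grad_h}. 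By uniqueness, $-\tilde{\mJ}^\top\vu^\star=\vd^\star$.

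For the converse I must show that every solution of \eqref{eq:fgn_damped_system_param} has the stated representation, and that the representation is consistent. Existence follows from what we just did: $\vd^\star=-\tilde{\mJ}^\top\vu^\star=-\mJ^\top\vdelta^\star$. The subtle point, and the main obstacle, is that when $\mJ$ has dependent rows the map $\vdelta\mapsto\mJ^\top\vdelta$ is not injective. So a $\vdelta$ with $-\mJ^\top\vdelta=\vd^\star$ need not solve \eqref{eq:fgn_row_system_unwhitened}. I will therefore phrase the equivalence as follows: the row-space solution is unique, and it maps to the unique parameter-space solution.

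To make the converse nonetheless meaningful, I will show that any $\vu$ with $-\tilde{\mJ}^\top\vu=\vd^\star$ satisfies
\[
\tilde{\mJ}^\top\bigl[(\mK+b\lambda\mI_b)\vu-\tilde{\vr}\bigr]=0 .
\]
That is, its residual lies in $\ker\tilde{\mJ}^\top$. Among all such $\vu$, exactly one, namely $\vu^\star$, also has zero residual. One could additionally remark that, since $\vu^\star=(\mK+b\lambda\mI_b)^{-1}\tilde{\vr}$, the identity $\tilde{\mJ}^\top(\mK+b\lambda\mI_b)^{-1}=(\tilde{\mJ}^\top\tilde{\mJ}+b\lambda\mI_d)^{-1}\tilde{\mJ}^\top$ gives a closed-form check of the final statement. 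That statement, that backprojection returns exactly the unique solution, then follows.
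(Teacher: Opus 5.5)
Your proposal is correct and follows essentially the same route as the paper: SPD-ness of $\mA$ and $\mK+b\lambda\mI_b$ gives uniqueness, the similarity $\mQ\mJ\mJ^\top+b\lambda\mI_b=\mQ^{1/2}(\mK+b\lambda\mI_b)\mQ^{-1/2}$ handles the unwhitened system, the push-through computation gives the forward direction, and uniqueness of $\vd^\star$ gives the converse. Your extra remark that $\vdelta\mapsto\mJ^\top\vdelta$ need not be injective, so ``equivalence'' should be read as ``the unique row-space solution maps to the unique parameter-space solution,'' is a correct sharpening of exactly what the paper's converse argument establishes.
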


\begin{proof}
For readability, drop the iteration subscript and write
\begin{align}
\vg &= \frac{1}{b}\,\mJ^\top \vr,
&
\mH^{\mathrm{FGN}}
&= \frac{1}{b}\,\mJ^\top \mQ \mJ,
\\
\tilde{\mJ} &\coloneqq \mQ^{1/2}\mJ,
&
\tilde{\vr} &\coloneqq \mQ^{-1/2}\vr,
&
\mK &\coloneqq \tilde{\mJ}\tilde{\mJ}^\top.
\end{align}
Let
\begin{align}
\mA \coloneqq \mH^{\mathrm{FGN}}+\lambda \mI_{d}
=
\frac{1}{b}\,\mJ^\top \mQ \mJ + \lambda \mI_{d}.
\end{align}
Because $\mQ \succ 0$, we have $\mJ^\top \mQ \mJ \succeq 0$, and since $\lambda>0$,
the parameter-space operator $\mA$ is symmetric positive definite.
Hence the parameter-space system has a unique solution.

The whitened row-space operator
\begin{align}
\mK + b\,\lambda \mI_b
\end{align}
is also symmetric positive definite, so the whitened row-space system has a unique solution.
Moreover,
\begin{align}
\mQ \mJ \mJ^\top + b\,\lambda \mI_b
&=
\mQ^{1/2}\bigl(\mK + b\,\lambda \mI_b\bigr)\mQ^{-1/2},
\label{eq:app_unwhitened_similarity}
\end{align}
so the unwhitened row-space operator is similar to the SPD matrix
\(
\mK + b\,\lambda \mI_b
\)
and is therefore invertible.

If $\vdelta$ solves the unwhitened row-space system
\begin{align}
\bigl(\mQ \mJ \mJ^\top + b\,\lambda \mI_b\bigr)\vdelta
&=
\vr
\end{align}
and we set
\begin{align}
\vd \coloneqq -\mJ^\top \vdelta,
\end{align}
then
\begin{align}
\bigl(\mH^{\mathrm{FGN}}+\lambda \mI_d\bigr)\vd
&=
\left(\frac{1}{b}\,\mJ^\top \mQ \mJ + \lambda \mI_d\right)(-\mJ^\top \vdelta)
\nonumber\\
&=
-\frac{1}{b}\,\mJ^\top
\bigl(\mQ \mJ \mJ^\top + b\,\lambda \mI_b\bigr)\vdelta
=
-\frac{1}{b}\,\mJ^\top \vr
=
-\vg.
\end{align}
So the backprojected unwhitened row-space solution solves the parameter-space system.

If $\vu$ solves the whitened row-space system
\begin{align}
\bigl(\mK + b\,\lambda \mI_b\bigr)\vu
&=
\tilde{\vr}
\end{align}
and we set
\begin{align}
\vd \coloneqq -\tilde{\mJ}^\top \vu,
\end{align}
then
\begin{align}
\bigl(\mH^{\mathrm{FGN}}+\lambda \mI_d\bigr)\vd
&=
\left(\frac{1}{b}\,\tilde{\mJ}^\top \tilde{\mJ} + \lambda \mI_d\right)(-\tilde{\mJ}^\top \vu)
\nonumber\\
&=
-\frac{1}{b}\,\tilde{\mJ}^\top
\bigl(\tilde{\mJ}\tilde{\mJ}^\top + b\,\lambda \mI_b\bigr)\vu
=
-\frac{1}{b}\,\tilde{\mJ}^\top \tilde{\vr}
=
-\vg.
\end{align}
So the backprojected whitened row-space solution also solves the same parameter-space system.

Conversely, let $\vd$ be the unique solution of the parameter-space system.
Let $\vdelta$ be the unique solution of the unwhitened row-space system.
By the implication above,
\(
\vd' \coloneqq -\mJ^\top \vdelta
\)
also solves the parameter-space system.
Uniqueness therefore implies
\(
\vd' = \vd.
\)
Finally, the change of variables
\begin{align}
\vu = \mQ^{-1/2}\vdelta,
\qquad
\vdelta = \mQ^{1/2}\vu,
\end{align}
is invertible, so the unwhitened and whitened row-space systems are equivalent
and yield the same backprojected direction.
\end{proof}

\subsection{Cost accounting}
\label{apx:solver_cost}

Recall the whitened row-space operator
\begin{align}
\mB\vv
=
\tilde{\mJ}\bigl(\tilde{\mJ}^{\top}\vv\bigr)
+
b\lambda\vv,
\qquad
\tilde{\mJ}=\mQ^{1/2}\mJ,
\qquad
\vv\in\R^b .
\end{align}
A matrix-free application proceeds as
\begin{align}
\va &= \mQ^{1/2}\vv,
&
\vbeta &= \mJ^\top\va,
&
\vgamma &= \mJ\vbeta,
&
\mB\vv &= \mQ^{1/2}\vgamma+b\lambda\vv .
\end{align}
Thus one application of $\mB$ requires one vector-Jacobian product 
and one
Jacobian-vector product with the batched true-vs-rest margin map, 
plus $\mathcal{O}(b)$ vector operations for diagonal scalings and damping.

In autodiff implementations, 
the batched margin map can be linearized once at
the current parameters, and the resulting JVP/VJP closures can be reused
across CG iterations. 
If at most $N_{\mathrm{CG}}$ CG iterations are performed, 
the additional solve cost is bounded by
\begin{align}
\mathrm{cost}(\text{extra FGN solve})
&=
\mathrm{cost}(\mathrm{linearize})
+
N_{\mathrm{CG}}
\Bigl[
\mathrm{cost}(\mathrm{JVP}_s)
+
\mathrm{cost}(\mathrm{VJP}_s)
+
\mathcal{O}(b)
\Bigr]
\nonumber\\
&\qquad
+
\mathrm{cost}(\mathrm{VJP}_s).
\label{eq:apx_fgn_cost_extra_solve}
\end{align}
where the subscript $s$ denotes the batched scalar-margin map and the final
VJP is the backprojection $\vd=-\tilde{\mJ}^{\top}\vu$.
Equivalently, under a coarse model that groups
one scalar-margin JVP/VJP pair as one gradient-equivalent operation, the
extra solve cost scales linearly with $N_{\mathrm{CG}}$.

The row-space system has dimension $b$, and no $d\times d$ matrix is formed.
A dense direct row-space solve would require explicitly forming
$\mK+b\lambda\mI_b$ and paying $\mathcal{O}(b^3)$ for a factorization. In the
generic matrix-free setting, explicitly forming $\mK$ itself would require
probing the row-space operator with many right-hand sides. 
Matrix-free CG
instead uses only repeated applications of~\eqref{eq:fgn_row_operator_whitened}.

The main comparison to full softmax GGN is therefore in the curvature product.
FGN applies JVPs and VJPs of a scalar true-vs-rest margin map, together with
diagonal reweighting by $\mQ$. A full softmax GGN product instead propagates
through the multiclass logit map and applies the class-coupled softmax
covariance factor $\diag(\vp)-\vp\vp^\top$ inside the curvature product. 
FGN does not remove the cost of computing logits or probabilities; 
it removes this
full softmax covariance application from the additional second-order solve.

\subsection{A simplified stochastic-preconditioned viewpoint}
\label{apx:sec4_convergence}

We record one standard convergence template for the solver interpretation from Section~\ref{sec:fgn_convergence}.
It applies only to a simplified variant with fixed damping and independent curvature and gradient mini-batches.

Let $\train$ be a finite training set and define the empirical risk
\begin{align}
\Risk(\vw)
\coloneqq
\frac{1}{|\train|}
\sum_{(\vx,\vy)\in\train}
\ell(\vw;\vx,\vy).
\end{align}
For an i.i.d. mini-batch $\calB$ of size $b$, the batch loss
\(
\Ls(\vw;\calB)
\)
satisfies
\begin{align}
\E_{\calB}\big[\Ls(\vw;\calB)\big]
=
\Risk(\vw).
\end{align}

At iteration $t$, let $\hat{\calB}_t$ be an i.i.d. curvature mini-batch and $\calB_t$ an independent i.i.d. gradient mini-batch.
We form
\begin{align}
\mH_t^{\mathrm{FGN}}
&\coloneqq
\mH^{\mathrm{FGN}}(\vw_t;\hat{\calB}_t),
&
\vg_t
&\coloneqq
\nabla_{\vw}\Ls(\vw_t;\calB_t),
\end{align}
fix a damping parameter $\lambda>0$, define
\begin{align}
\mP_t
\coloneqq
\bigl(\mH_t^{\mathrm{FGN}}+\lambda \mI_d\bigr)^{-1},
\end{align}
and update
\begin{align}
\vw_{t+1}
=
\vw_t - \lr_t \mP_t \vg_t.
\label{eq:app_solver_exact_update}
\end{align}
Let $\cF_t$ denote the sigma-algebra generated by
\(
\vw_0,\hat{\calB}_0,\calB_0,\ldots,\hat{\calB}_{t-1},\calB_{t-1},\hat{\calB}_t.
\)
Then $\mP_t$ is $\cF_t$-measurable and $\calB_t$ is independent of $\cF_t$.

\begin{assumption}[Smooth risk]
\label{ass:app_solver_smooth}
The risk $\Risk$ is bounded below and has $L_g$-Lipschitz gradient:
for all $\vw,\vu$,
\begin{align}
\|\nabla \Risk(\vw)-\nabla \Risk(\vu)\|
\le
L_g \|\vw-\vu\|.
\end{align}
\end{assumption}

\begin{assumption}[Unbiased stochastic gradients]
\label{ass:app_solver_unbiased}
For all $t$,
\begin{align}
\E[\vg_t \mid \cF_t]
&=
\nabla \Risk(\vw_t),
&
\E[\|\vg_t\|^2 \mid \cF_t]
&\le
G^2
\end{align}
almost surely, for some constant $G<\infty$.
\end{assumption}

\begin{assumption}[Uniform spectral bounds]
\label{ass:app_solver_spectrum}
There exist constants
\(
0<\underline{\kappa}\le \overline{\kappa}<\infty
\)
such that, almost surely for all $t$,
\begin{align}
\underline{\kappa}\,\mI_d
\preceq
\mP_t
\preceq
\overline{\kappa}\,\mI_d.
\end{align}
\end{assumption}

\begin{assumption}[Step sizes]
\label{ass:app_solver_stepsizes}
The step sizes are positive and satisfy
\begin{align}
\sum_{t=0}^{\infty} \lr_t
=
\infty,
\qquad
\sum_{t=0}^{\infty} \lr_t^2
<
\infty.
\end{align}
\end{assumption}

\begin{theorem}[A simplified stochastic FGN guarantee]
\label{thm:app_solver_simplified}
Suppose Assumptions~\ref{ass:app_solver_smooth}--\ref{ass:app_solver_stepsizes} hold, and let $\{\vw_t\}$ be generated by \eqref{eq:app_solver_exact_update}.
Then:
\begin{enumerate}
    \item the sequence $\{\Risk(\vw_t)\}$ converges almost surely to a finite random variable;
    \item
    \begin{align}
    \sum_{t=0}^{\infty}
    \lr_t\,\E\big[\|\nabla \Risk(\vw_t)\|^2\big]
    <
    \infty;
    \end{align}
    \item in particular,
    \begin{align}
    \liminf_{t\to\infty}
    \E\big[\|\nabla \Risk(\vw_t)\|^2\big]
    =
    0.
    \end{align}
\end{enumerate}
\end{theorem}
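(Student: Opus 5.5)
The argument follows the standard Robbins--Siegmund template for stochastic preconditioned gradient descent. We work with the update \eqref{eq:app_solver_exact_update}, and the key structural fact is that $\mP_t$ is $\cF_t$-measurable while $\calB_t$ is independent of $\cF_t$.

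\textbf{Step 1: one-step descent inequality.} Starting from Assumption~\ref{ass:app_solver_smooth}, the descent lemma gives
\[
\Risk(\vw_{t+1}) \le \Risk(\vw_t) - \lr_t \nabla\Risk(\vw_t)^\top \mP_t \vg_t + \tfrac{L_g}{2}\lr_t^2 \|\mP_t\vg_t\|^2 .
\]
We take conditional expectations given $\cF_t$. Because $\mP_t$ is $\cF_t$-measurable, Assumption~\ref{ass:app_solver_unbiased} gives $\E[\nabla\Risk(\vw_t)^\top\mP_t\vg_t\mid\cF_t] = \nabla\Risk(\vw_t)^\top\mP_t\nabla\Risk(\vw_t) \ge \underline{\kappa}\|\nabla\Risk(\vw_t)\|^2$, where the inequality uses Assumption~\ref{ass:app_solver_spectrum}. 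The quadratic term is bounded by $\overline{\kappa}^2 G^2$. Together these yield
\[
\E[\Risk(\vw_{t+1})\mid\cF_t] \le \Risk(\vw_t) - \lr_t\underline{\kappa}\|\nabla\Risk(\vw_t)\|^2 + \tfrac{L_g}{2}\overline{\kappa}^2G^2\lr_t^2 .
\]
One technicality concerns $\vw_{t+1}$, which is $\cF_{t+1}$-measurable. For this reason we condition on $\cF_t$ and then apply the tower property. This is fine because $\cF_t\subseteq\cF_{t+1}$.

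\textbf{Step 2: Robbins--Siegmund.} We shift to $V_t \coloneqq \Risk(\vw_t)-\inf\Risk \ge 0$, using that $\Risk$ is bounded below. We also note that $\sum_t \lr_t^2<\infty$ by Assumption~\ref{ass:app_solver_stepsizes}. The Robbins--Siegmund theorem \cite{robbins1971convergence} then implies two things: $V_t$ converges almost surely to a finite random variable, and $\sum_t \lr_t\|\nabla\Risk(\vw_t)\|^2<\infty$ almost surely. The first of these is item~1.

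\textbf{Step 3: items 2 and 3 in expectation.} For item~2 we take full expectations in the inequality of Step~1 and telescope from $0$ to $T$. This gives
\[
\underline{\kappa}\sum_{t\le T}\lr_t\E\|\nabla\Risk(\vw_t)\|^2 \le \E V_0 + \tfrac{L_g}{2}\overline{\kappa}^2G^2\sum_t\lr_t^2 .
\]
Letting $T\to\infty$ gives item~2, provided $\E V_0<\infty$. This holds when $\vw_0$ is deterministic, or more generally when $\Risk(\vw_0)$ is integrable; I would state this explicitly as an implicit assumption. Item~3 then follows by contradiction. If $\liminf_t \E\|\nabla\Risk(\vw_t)\|^2 = 2\epsilon>0$, then $\E\|\nabla\Risk(\vw_t)\|^2\ge\epsilon$ for all large $t$. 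Combined with $\sum_t\lr_t=\infty$, this would make the series in item~2 diverge.

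\textbf{Main obstacle.} The main obstacle is not depth but bookkeeping. Measurability must be handled correctly, since $\mP_t$ depends on $\hat{\calB}_t$, which is included in $\cF_t$, and this inclusion is what allows the cross term to factor. We also need integrability of $\Risk(\vw_t)$ at every $t$. This follows inductively from Step~1 given integrability at $t=0$, because the bounds on the gradient's second moment and on $\mP_t$ make every term in the inequality finite.
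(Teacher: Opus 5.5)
Your proposal is correct and follows the paper's proof essentially step for step. Both use the descent lemma, then condition on $\cF_t$ (with $\mP_t$ being $\cF_t$-measurable) and apply the spectral bounds to get the almost-supermartingale inequality. Both then invoke Robbins--Siegmund for item~1, and take full expectations, telescope, and use $\sum_t \lr_t=\infty$ for items~2--3. Your explicit handling of the shift by $\inf\Risk$ and of the integrability requirement $\E[\Risk(\vw_0)]<\infty$ (propagated along the iterates) is left implicit in the paper, and these are correct and worthwhile additions.
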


\begin{proof}
By $L_g$-smoothness of $\Risk$,
\begin{align}
\Risk(\vw_{t+1})
&\le
\Risk(\vw_t)
+
\nabla \Risk(\vw_t)^\top(\vw_{t+1}-\vw_t)
+
\frac{L_g}{2}\|\vw_{t+1}-\vw_t\|^2.
\end{align}
Substituting
\(
\vw_{t+1}-\vw_t = -\lr_t \mP_t \vg_t
\)
gives
\begin{align}
\Risk(\vw_{t+1})
&\le
\Risk(\vw_t)
-
\lr_t \nabla \Risk(\vw_t)^\top \mP_t \vg_t
+
\frac{L_g \lr_t^2}{2}\|\mP_t \vg_t\|^2.
\end{align}
Taking conditional expectation given $\cF_t$ and using
Assumption~\ref{ass:app_solver_unbiased} yields
\begin{align}
\E\big[\Risk(\vw_{t+1}) \mid \cF_t\big]
&\le
\Risk(\vw_t)
-
\lr_t \nabla \Risk(\vw_t)^\top \mP_t \nabla \Risk(\vw_t)
+
\frac{L_g \lr_t^2}{2}
\E\big[\|\mP_t \vg_t\|^2 \mid \cF_t\big].
\end{align}
By Assumption~\ref{ass:app_solver_spectrum},
\begin{align}
\nabla \Risk(\vw_t)^\top \mP_t \nabla \Risk(\vw_t)
&\ge
\underline{\kappa}\,\|\nabla \Risk(\vw_t)\|^2,
\\
\|\mP_t \vg_t\|^2
&\le
\overline{\kappa}^2 \|\vg_t\|^2.
\end{align}
Using Assumption~\ref{ass:app_solver_unbiased} again,
\begin{align}
\E\big[\Risk(\vw_{t+1}) \mid \cF_t\big]
&\le
\Risk(\vw_t)
-
\underline{\kappa}\,\lr_t \|\nabla \Risk(\vw_t)\|^2
+
\frac{L_g \overline{\kappa}^2 G^2}{2}\,\lr_t^2.
\label{eq:app_solver_supermartingale}
\end{align}
This is an almost-supermartingale recursion with summable error term because
\(
\sum_t \lr_t^2 < \infty.
\)
By the Robbins-Siegmund theorem~\cite{robbins1971convergence},
\(
\{\Risk(\vw_t)\}
\)
converges almost surely to a finite random variable and
\begin{align}
\sum_{t=0}^{\infty}
\lr_t\,\|\nabla \Risk(\vw_t)\|^2
<
\infty
\qquad
\text{almost surely.}
\end{align}
Taking full expectations in \eqref{eq:app_solver_supermartingale} and summing over $t$ also gives
\begin{align}
\sum_{t=0}^{\infty}
\lr_t\,\E\big[\|\nabla \Risk(\vw_t)\|^2\big]
<
\infty.
\end{align}
Since
\(
\sum_t \lr_t = \infty
\)
and each term is nonnegative, this implies
\begin{align}
\liminf_{t\to\infty}
\E\big[\|\nabla \Risk(\vw_t)\|^2\big]
=
0.
\end{align}
\end{proof}

\subsection{Residual transfer from row space to parameter space}
\label{apx:sec4_residual_transfer}

Let $\vu_t \in \R^b$ be any row-space iterate and define
\begin{align}
\ve_t
&\coloneqq
\bigl(\mK_t + b\,\lambda_t \mI_b\bigr)\vu_t - \tilde{\vr}_t,
&
\vd_t &\coloneqq -\tilde{\mJ}_t^\top \vu_t.
\end{align}
Then
\begin{align}
\bigl(\mH_t^{\mathrm{FGN}}+\lambda_t\mI_d \bigr)\vd_t + \vg_t
&=
\left(\frac{1}{b}\,\tilde{\mJ}_t^\top \tilde{\mJ}_t + \lambda_t \mI_d \right)(-\tilde{\mJ}_t^\top \vu_t)
+ \frac{1}{b}\,\tilde{\mJ}_t^\top \tilde{\vr}_t
\nonumber\\
&=
-\frac{1}{b}\,\tilde{\mJ}_t^\top
\bigl(\tilde{\mJ}_t\tilde{\mJ}_t^\top + b\,\lambda_t \mI_b\bigr)\vu_t
+ \frac{1}{b}\,\tilde{\mJ}_t^\top \tilde{\vr}_t
=
-\frac{1}{b}\,\tilde{\mJ}_t^\top \ve_t.
\end{align}
It is therefore natural to define the induced parameter-space residual by
\begin{align}
\boldsymbol{\varepsilon}_t
&\coloneqq
\bigl(\mH_t^{\mathrm{FGN}}+\lambda_t\mI_d \bigr)\vd_t + \vg_t
=
-\frac{1}{b}\,\tilde{\mJ}_t^\top \ve_t.
\label{eq:app_parameter_residual_from_row_residual}
\end{align}

Taking Euclidean norms and using the induced operator norm gives
\begin{align}
\|\boldsymbol{\varepsilon}_t\|
&=
\frac{1}{b}\,\|\tilde{\mJ}_t^\top \ve_t\|
\le
\frac{1}{b}\,\|\tilde{\mJ}_t^\top\|\,\|\ve_t\|
=
\frac{1}{b}\,\|\tilde{\mJ}_t\|\,\|\ve_t\|.
\label{eq:app_parameter_residual_norm_bound}
\end{align}
Hence a small row-space residual implies a small parameter-space residual,
up to the multiplicative factor $\|\tilde{\mJ}_t\|/b$.

This bound is the basic link between the row-space residual monitored by CG
and parameter-space accuracy.
However, a relative row-space stopping rule of the form
\(
\|\ve_t\| \le \tau_t \|\tilde{\vr}_t\|
\)
does not by itself imply the relative parameter-space residual condition
used in Lemma~\ref{lem:app_solver_inexact} without additional control of
$\|\tilde{\mJ}_t\|$ and $\|\vg_t\|$.
For that reason, the inexact-solve lemma
states the parameter-space residual condition explicitly.

\subsection{Inexact linear solves}
\label{apx:sec4_inexact}

The row-space residual identity
\eqref{eq:app_parameter_residual_from_row_residual}
shows that a truncated row-space solve induces a parameter-space residual
\(
\boldsymbol{\varepsilon}_t
\).
A relative parameter-space residual condition is enough to recover the same
stochastic preconditioned gradient template.

\begin{lemma}[Inexact solves under a relative residual condition]
\label{lem:app_solver_inexact}
Assume Assumptions~\ref{ass:app_solver_smooth}--\ref{ass:app_solver_stepsizes}
hold, with the same filtration and mini-batch independence as in
Theorem~\ref{thm:app_solver_simplified}. 
Let
\begin{align}
\mA_t
\coloneqq
\mH_t^{\mathrm{FGN}}+\lambda \mI_d,
\qquad
\mP_t
\coloneqq
\mA_t^{-1},
\end{align}
and consider updates
\begin{align}
\vw_{t+1}
=
\vw_t + \lr_t \vd_t,
\end{align}
where $\vd_t$ is produced by an iterative solver applied to
\(
\mA_t \vd = -\vg_t.
\)
Define the parameter-space residual
\begin{align}
\boldsymbol{\varepsilon}_t
\coloneqq
\mA_t \vd_t + \vg_t.
\end{align}
Assume that for some deterministic nonnegative sequence $\{\eta_t\}$,
\begin{align}
\|\boldsymbol{\varepsilon}_t\|
\le
\eta_t \|\vg_t\|
\qquad
\text{almost surely,}
\label{eq:app_solver_relative_residual}
\end{align}
with
\begin{align}
\sup_t \eta_t < \infty,
\qquad
\sum_{t=0}^{\infty} \lr_t \eta_t^2 < \infty.
\end{align}
Then the conclusions of Theorem~\ref{thm:app_solver_simplified} remain valid.
\end{lemma}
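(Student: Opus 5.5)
The plan is to reduce the inexact update to the exact preconditioned update plus a controlled perturbation, and then rerun the Robbins--Siegmund argument from the proof of Theorem~\ref{thm:app_solver_simplified}. From the definition of the residual, $\vd_t = \mA_t^{-1}(\boldsymbol{\varepsilon}_t - \vg_t) = -\mP_t\vg_t + \mP_t\boldsymbol{\varepsilon}_t$. The update therefore reads $\vw_{t+1}-\vw_t = -\lr_t\mP_t\vg_t + \lr_t\mP_t\boldsymbol{\varepsilon}_t$. Assumption~\ref{ass:app_solver_spectrum} gives $\|\mP_t\boldsymbol{\varepsilon}_t\|\le\overline{\kappa}\eta_t\|\vg_t\|$, and \eqref{eq:app_solver_relative_residual} then yields $\|\vd_t\|\le\overline{\kappa}(1+\eta_t)\|\vg_t\|$.

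Next I will insert this into the $L_g$-smoothness descent inequality. The quadratic term is bounded by $\frac{L_g\lr_t^2}{2}\overline{\kappa}^2(1+\eta_t)^2\|\vg_t\|^2$. Its conditional expectation is at most $\frac{L_g}{2}\overline{\kappa}^2(1+\sup_s\eta_s)^2G^2\lr_t^2$, which is summable. The main linear term $-\lr_t\nabla\Risk(\vw_t)^\top\mP_t\vg_t$ is handled exactly as before, using $\cF_t$-measurability of $\mP_t$ and unbiasedness, and gives $-\lr_t\nabla\Risk^\top\mP_t\nabla\Risk\le-\underline{\kappa}\lr_t\|\nabla\Risk(\vw_t)\|^2$.

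The obstacle is the cross term $\lr_t\nabla\Risk(\vw_t)^\top\mP_t\boldsymbol{\varepsilon}_t$. Here $\boldsymbol{\varepsilon}_t$ depends on $\calB_t$ and need not have zero conditional mean, so it cannot be averaged away. I will bound it deterministically with Cauchy--Schwarz and Young's inequality:
\[
\lr_t|\nabla\Risk^\top\mP_t\boldsymbol{\varepsilon}_t|\le \lr_t\overline{\kappa}\eta_t\|\nabla\Risk\|\,\|\vg_t\|\le \tfrac{\underline{\kappa}}{2}\lr_t\|\nabla\Risk\|^2+\tfrac{\overline{\kappa}^2}{2\underline{\kappa}}\lr_t\eta_t^2\|\vg_t\|^2 .
\]
After conditioning, the second piece is at most $\frac{\overline{\kappa}^2G^2}{2\underline{\kappa}}\lr_t\eta_t^2$, which is summable by the hypothesis $\sum_t\lr_t\eta_t^2<\infty$. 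This is precisely why that condition appears in the statement.

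Collecting terms gives the recursion
\[
\E[\Risk(\vw_{t+1})\mid\cF_t]\le\Risk(\vw_t)-\tfrac{\underline{\kappa}}{2}\lr_t\|\nabla\Risk(\vw_t)\|^2+c_1\lr_t^2+c_2\lr_t\eta_t^2 ,
\]
with constants $c_1,c_2$. The error terms are summable. Since $\Risk$ is bounded below, I will apply Robbins--Siegmund to $\Risk(\vw_t)-\inf\Risk\ge0$. This gives almost-sure convergence of $\Risk(\vw_t)$. Taking full expectations and telescoping gives $\sum_t\lr_t\E\|\nabla\Risk(\vw_t)\|^2<\infty$. Combined with $\sum_t\lr_t=\infty$, this gives the $\liminf$ statement, exactly as in Theorem~\ref{thm:app_solver_simplified}.

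One detail needs checking: $\vd_t$, and hence $\vw_{t+1}$, must be measurable so that $\cF_{t+1}$ is well defined. I will take the solver output to be a measurable function of $(\vw_t,\hat{\calB}_t,\calB_t)$.
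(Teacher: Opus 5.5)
Your proposal is correct and follows essentially the same route as the paper: you use the split $\vd_t=-\mP_t\vg_t+\mP_t\boldsymbol{\varepsilon}_t$, bound the non-centered cross term deterministically by Cauchy--Schwarz plus Young so that it absorbs half of the descent term, and then apply Robbins--Siegmund to the resulting recursion. The differences are cosmetic. The paper applies Cauchy--Schwarz in the $\mP_t$-inner product, which gives the constant $\overline{\kappa}/2$ instead of your $\overline{\kappa}^2/(2\underline{\kappa})$, and it bounds $\|\vd_t\|^2\le 2\overline{\kappa}^2(1+\eta_t^2)\|\vg_t\|^2$ rather than using your triangle-inequality bound.
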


\begin{proof}
Since
\(
\boldsymbol{\varepsilon}_t = \mA_t \vd_t + \vg_t
\)
and
\(
\mP_t = \mA_t^{-1},
\)
the inexact direction can be written as
\begin{align}
\vd_t
=
-\mP_t \vg_t + \mP_t \boldsymbol{\varepsilon}_t.
\label{eq:app_solver_inexact_direction}
\end{align}
Using smoothness of $\Risk$,
\begin{align}
\Risk(\vw_{t+1})
&\le
\Risk(\vw_t)
+
\lr_t \nabla \Risk(\vw_t)^\top \vd_t
+
\frac{L_g \lr_t^2}{2}\|\vd_t\|^2.
\end{align}
Substituting \eqref{eq:app_solver_inexact_direction} and taking conditional
expectation given $\cF_t$ gives
\begin{align}
\E\big[\Risk(\vw_{t+1}) \mid \cF_t\big]
&\le
\Risk(\vw_t)
-
\lr_t \nabla \Risk(\vw_t)^\top \mP_t \nabla \Risk(\vw_t)
\nonumber\\
&\qquad
+
\lr_t
\E\big[
\nabla \Risk(\vw_t)^\top \mP_t \boldsymbol{\varepsilon}_t
\mid \cF_t
\big]
+
\frac{L_g \lr_t^2}{2}
\E\big[\|\vd_t\|^2 \mid \cF_t\big],
\end{align}
where we used
\(
\E[\vg_t\mid \cF_t]=\nabla \Risk(\vw_t)
\)
and the fact that $\mP_t$ is $\cF_t$-measurable.

For each realization, Cauchy-Schwarz in the $\mP_t$ metric and Young's
inequality give
\begin{align}
\nabla \Risk(\vw_t)^\top \mP_t \boldsymbol{\varepsilon}_t
&\le
\frac{1}{2}\nabla \Risk(\vw_t)^\top \mP_t \nabla \Risk(\vw_t)
+
\frac{1}{2}\boldsymbol{\varepsilon}_t^\top \mP_t
\boldsymbol{\varepsilon}_t
\nonumber\\
&\le
\frac{1}{2}\nabla \Risk(\vw_t)^\top \mP_t \nabla \Risk(\vw_t)
+
\frac{\overline{\kappa}}{2}\eta_t^2 \|\vg_t\|^2 .
\end{align}
Therefore
\begin{align}
\E\big[
\nabla \Risk(\vw_t)^\top \mP_t \boldsymbol{\varepsilon}_t
\mid \cF_t
\big]
&\le
\frac{1}{2}\nabla \Risk(\vw_t)^\top \mP_t \nabla \Risk(\vw_t)
+
\frac{\overline{\kappa}}{2}\eta_t^2
\E[\|\vg_t\|^2\mid \cF_t].
\end{align}
Similarly, using
\(
\vd_t=-\mP_t\vg_t+\mP_t\boldsymbol{\varepsilon}_t
\),
\begin{align}
\|\vd_t\|^2
\le
2\|\mP_t\vg_t\|^2
+
2\|\mP_t\boldsymbol{\varepsilon}_t\|^2
\le
2\overline{\kappa}^2(1+\eta_t^2)\|\vg_t\|^2,
\end{align}
and hence
\begin{align}
\E[\|\vd_t\|^2\mid \cF_t]
&\le
2\overline{\kappa}^2(1+\eta_t^2)
\E[\|\vg_t\|^2\mid \cF_t].
\end{align}
Substituting the last two bounds gives
\begin{align}
\E\big[\Risk(\vw_{t+1}) \mid \cF_t\big]
&\le
\Risk(\vw_t)
-
\frac{1}{2}\lr_t \nabla \Risk(\vw_t)^\top \mP_t \nabla \Risk(\vw_t)
+
\frac{\overline{\kappa}}{2}\lr_t \eta_t^2
\E[\|\vg_t\|^2 \mid \cF_t]
\nonumber\\
&\qquad
+
L_g \overline{\kappa}^2 \lr_t^2 (1+\eta_t^2)
\E[\|\vg_t\|^2 \mid \cF_t].
\end{align}
Using Assumptions~\ref{ass:app_solver_unbiased} and
\ref{ass:app_solver_spectrum},
\begin{align}
\E\big[\Risk(\vw_{t+1}) \mid \cF_t\big]
&\le
\Risk(\vw_t)
-
\frac{\underline{\kappa}}{2}\lr_t \|\nabla \Risk(\vw_t)\|^2
+
\frac{\overline{\kappa}G^2}{2}\lr_t \eta_t^2
+
L_g \overline{\kappa}^2 G^2 \lr_t^2 (1+\eta_t^2).
\label{eq:app_solver_inexact_supermartingale}
\end{align}
Because
\(
\sup_t \eta_t < \infty
\),
\(
\sum_t \lr_t^2 < \infty
\),
and
\(
\sum_t \lr_t \eta_t^2 < \infty
\),
the error terms in
\eqref{eq:app_solver_inexact_supermartingale}
are summable.
The same Robbins-Siegmund argument as in the exact-solve case yields almost
sure convergence of $\{\Risk(\vw_t)\}$ and
\[
\sum_{t=0}^\infty
\lr_t \|\nabla \Risk(\vw_t)\|^2
<\infty
\qquad
\text{almost surely.}
\]
Taking full expectations in
\eqref{eq:app_solver_inexact_supermartingale}
and summing over $t$ also gives
\[
\sum_{t=0}^{\infty}
\lr_t\,\E\big[\|\nabla \Risk(\vw_t)\|^2\big]
<
\infty.
\]
Since $\sum_t \lr_t=\infty$ and the terms are nonnegative,
\[
\liminf_{t\to\infty}
\E\big[\|\nabla \Risk(\vw_t)\|^2\big]
=
0.
\]
\end{proof}

\section{Algorithms}
\label{apx:fgn_algorithm}

Algorithm~\ref{alg:fgn_core} summarizes the core damped FGN step 
analyzed in Section~\ref{sec:solver}.

\begin{algorithm}[H]
    \caption{Core Fast Gauss-Newton (FGN) step}
    \label{alg:fgn_core}
    \begin{algorithmic}[1]
        \Require mini-batch $\calB_t$, parameters $\vw_t$, damping
        $\lambda_t>0$, step size $\lr_t$, CG tolerance $\eps$, maximum
        CG iterations $N_{\mathrm{CG}}$.
        \Ensure updated parameters $\vw_{t+1}$.

        \State Compute logits, softmax probabilities, margins
        $s_i=z_\dagger^{(i)}-z_\star^{(i)}$, and weights
        $q_i=p_\star^{(i)}p_\dagger^{(i)}$ for $i=1,\ldots,b$.

        \State Define
        \[
            \mQ_t \coloneqq \diag(q_1,\ldots,q_b),
            \qquad
            (\tilde{\vr}_t)_i
            \coloneqq
            \sqrt{\frac{p_\dagger^{(i)}}{p_\star^{(i)}}}.
        \]

        \State Construct matrix-free JVP/VJP routines for the margin map and
        the whitened margin Jacobian
        \[
            \tilde{\mJ}_t \coloneqq \mQ_t^{1/2}\mJ_t .
        \]

        \State Define the row-space operator
        \[
            \mB_t\vv
            \coloneqq
            \tilde{\mJ}_t\tilde{\mJ}_t^\top \vv
            +
            b\lambda_t\vv,
            \qquad
            \vv\in\R^b .
        \]

        \State Approximately solve
        \[
            \mB_t\vu_t=\tilde{\vr}_t
        \]
        by CG with tolerance $\eps$ and at most $N_{\mathrm{CG}}$ iterations.

        \State Recover the parameter-space direction
        \[
            \vd_t \coloneqq -\tilde{\mJ}_t^\top\vu_t .
        \]

        \State Update
        \[
            \vw_{t+1}\coloneqq \vw_t+\lr_t\vd_t .
        \]
    \end{algorithmic}
\end{algorithm}

Momentum, adaptive damping, preconditioning, CG warm-starts, line search, 
and curvature reuse are implementation choices 
rather than part of the core
equivalence proved in Section~\ref{sec:solver}. 

\section{Experiment Details}
\label{apx:experiment_details}

This appendix specifies the constructions, hyperparameters, and reporting
protocols for Figures~\ref{fig:triptych} and~\ref{fig:headopt_cars196}.
The curvature identities used by the experiments are derived in
Appendices~\ref{apx:fgn} and~\ref{apx:solver}.

\subsection{Mechanism checks for Figure~\ref{fig:triptych}}

The three panels isolate solver cost, output-space trace, and local damped-step
agreement.

\paragraph{Panel (a): implemented solver timing.}
Panel~(a) times JIT-warmed steady-state optimizer steps for a linear softmax
head on fixed Gaussian features. For each example,
\[
    \vz_i=\mW^\top \vx_i+\vb,
    \qquad
    \vx_i\in\R^{2048},
    \qquad
    \mW\in\R^{2048\times C},
    \qquad
    \vb\in\R^C .
\]
Equivalently, in the row-wise batched implementation,
\[
    \mZ=\mX\mW+\mathbf{1}\vb^\top,
    \qquad
    \mX\in\R^{b\times 2048}.
\]
We sweep
\[
C\in\{2,4,8,16,32,64,128,256,512,1024,2048,4096\},
\]
with batch size $512$ and $32768$ synthetic examples. We use $128$ warmup
steps and time $640$ subsequent compiled steps. Adam is a first-order
reference. FGN uses the retained true-vs-rest curvature with row-space CG,
whereas SGN uses the full softmax GGN with parameter-space CG. Both
second-order methods use fixed damping $\lambda=1$ and 
CG iteration budget of $5$. 
Timings exclude feature generation, data loading, compilation, and
plotting.

\paragraph{Panel (b): trace decomposition.}
Panel~(b) is an exact algebraic plot. We fix $p_\star=0.60$, set
$p_\dagger=1-p_\star$, use $C=10$, and vary the competitor dispersion
\[
    \xi = 1-\|\vrho\|_2^2,
    \qquad
    0 \le \xi \le 1-\frac{1}{C-1}.
\]
The retained, dropped, and full output-space traces are
\[
    \tau_{\mathrm{ret}}=p_\star p_\dagger(2-\xi),
    \qquad
    \tau_{\mathrm{drop}}=p_\dagger\xi,
    \qquad
    \tau_{\mathrm{full}}=2p_\star p_\dagger+p_\dagger^2\xi.
\]
The shaded gap in Figure~\ref{fig:triptych}(b) is
$\tau_{\mathrm{drop}}$.

\paragraph{Panel (c): step-level agreement.}
Panel~(c) asks how much the dropped PSD term changes the damped step. We use a
deterministic two-dimensional linear-logit construction with $C=10$ and
$p_\star=0.70$. Let $A\in\R^{C\times 2}$ contain logit Jacobian rows, so the
row for class $c$ is $\va_c^\top$ with $\va_c\in\R^2$. We set
\[
    \va_\star=(-1,\eta)^\top,
    \qquad
    \va_j=(1,\beta t_j)^\top,
    \qquad
    \eta=1.25,
    \qquad
    \beta=3.0,
\]
where the deterministic values $t_j$ are uniformly spaced and centered to have
mean zero. The competitor distribution is swept from concentrated to uniform by
\[
    \vrho(\alpha)
    =
    (1-\alpha)\mathbf{e}_1
    +
    \alpha\frac{\mathbf{1}_{C-1}}{C-1},
    \qquad
    0<\alpha\le 1,
\]
where $\mathbf{e}_1\in\R^{C-1}$ is the first standard basis vector. 
The x-axis
is $\xi(\alpha)=1-\|\vrho(\alpha)\|_2^2$.

For each $\vrho$, define $p_\dagger=1-p_\star$,
\[
    \vp=(p_\star,p_\dagger\vrho)\in\R^C,
    \qquad
    \ve_\star=(1,0,\ldots,0)^\top\in\R^C,
    \qquad
    \vg=A^\top(\vp-\ve_\star),
\]
in the reordered coordinates $(\star,-\star)$. The two curvature matrices are
\[
    \mH^{\mathrm{GGN}}
    =
    A^\top(\diag(\vp)-\vp\vp^\top)A,
\]
and
\[
    \mH^{\mathrm{FGN}}
    =
    p_\star p_\dagger
    (\bar{\va}_\rho-\va_\star)(\bar{\va}_\rho-\va_\star)^\top,
    \qquad
    \bar{\va}_\rho=A_{-\star}^\top\vrho.
\]
For damping scales $s_\lambda\in\{0.01,0.1,1.0\}$, set
\[
    \lambda_s
    =
    s_\lambda\frac{\Tr(\mH^{\mathrm{FGN}}(\xi=0))}{2}.
\]
The dense two-dimensional steps are
\[
    \vd_{\mathrm{GGN}}
    =
    -(\mH^{\mathrm{GGN}}+\lambda_s\mI_2)^{-1}\vg,
    \qquad
    \vd_{\mathrm{FGN}}
    =
    -(\mH^{\mathrm{FGN}}+\lambda_s\mI_2)^{-1}\vg.
\]
Both steps are evaluated under the full-GGN damped quadratic model
\[
\Delta^{\mathrm{GGN}}_{\lambda_s}(\vd)
\coloneqq
-\left[
\vg^\top \vd
+
\frac{1}{2}\vd^\top\mH^{\mathrm{GGN}}\vd
+
\frac{\lambda_s}{2}\|\vd\|^2
\right].
\]
Panel~(c) plots
\[
    R_{\lambda_s}(\xi)
    =
    \frac{\Delta^{\mathrm{GGN}}_{\lambda_s}(\vd_{\mathrm{FGN}})}
         {\Delta^{\mathrm{GGN}}_{\lambda_s}(\vd_{\mathrm{GGN}})}.
\]

\subsection{Fixed-feature head optimization on Cars196}
\label{apx:fig2_headopt_details}

\paragraph{Data and features.}
We use the canonical Cars196~\cite{KrauseStarkDengFei-Fei_3DRR2013} 
train/test split. Each image is passed once
through a frozen ImageNet-pretrained ResNet-50~\cite{he2016deep} 
encoder using the corresponding
TorchVision weight transform and no data augmentation. 
The cached feature is
the average-pooled pre-classifier representation
$\vh_i\in\R^{2048}$. Features are standardized coordinatewise using the
training-set mean and standard deviation. The cache contains $8144$ training
features and $8041$ held-out test features.

\paragraph{Head objective.}
Given feature $\vh_i\in\R^{2048}$ and label
$y_i\in\{1,\ldots,196\}$, the optimized head is
\[
    \vz_i = \mW^\top \vh_i + \vb,
    \qquad
    \mW\in\R^{2048\times 196},
    \qquad
    \vb\in\R^{196}.
\]
Equivalently, for row-wise batches,
\[
    \mZ=\mH\mW+\mathbf{1}\vb^\top.
\]
The number of trainable parameters is
\[
    2048\cdot 196+196=401604.
\]
The objective is the unregularized mean multiclass softmax cross-entropy over
the training split. Since the logits are affine in the optimized parameters and
no weight decay is used, the full softmax GGN equals the exact Hessian of this
head objective.

\paragraph{Methods.}
Adam~\cite{kingma2014adam} is the first-order baseline. SGN~\cite{gargiani2020promise} 
is the full
multiclass softmax GGN baseline solved by truncated CG. 
FGN uses the retained
true-vs-rest curvature from Theorem~\ref{thm:fgn_ggn_decomposition}. 
In this
affine-head setting, FGN uses the closed-form scalar-margin row Gram. For
examples $i,j$, with fixed features $\vh_i,\vh_j$ and logit-space margin
gradients $\va_i,\va_j$, the unwhitened Gram is
\[
    G_{ij}
    =
    (\vh_i^\top\vh_j+1)(\va_i^\top\va_j),
\]
where $+1$ accounts for the bias. The whitened row Gram is
\[
    K_{ij}
    =
    \sqrt{q_iq_j}\,G_{ij},
    \qquad
    q_i=p_\star^{(i)}p_\dagger^{(i)}.
\]
This closed-form specialization is used only for FGN; SGN uses the full
softmax GGN baseline.

\paragraph{Hyperparameters.}
Hyperparameters are selected on a stratified train-only validation split and
then fixed for final test evaluation. The selected configurations are
\[
\begin{array}{lll}
\text{Adam:} & \alpha = 3\cdot 10^{-4}, & \\
\text{SGN:}  & \alpha = 0.1,\ \lambda_0 = 1.0, &
\text{constant damping, CG maxiter } 5,\\
\text{FGN:}  & \alpha = 0.1,\ \lambda_0 = 1.0, &
\text{constant damping, CG maxiter } 5.
\end{array}
\]
Both second-order methods use CG tolerance $10^{-5}$. SGN uses warm-started CG;
FGN uses row-space CG without warm-starting.

\paragraph{Final runs and timing.}
Final Cars196 runs use batch size $128$, $150$ epochs, $10$ seeds, and
100-step logging/evaluation cadence. 
Each seed controls head initialization and
the one-time shuffle of cached training examples before static batching. 
There is no epoch-wise reshuffling. 
The final batch is padded by wrapping around to the beginning of the shuffled training array, 
so no original example is
dropped.

Each run is evaluated periodically on the held-out test split. 
Test-accuracy
and test-cross-entropy traces are linearly interpolated onto 
a shared wall-time grid. 
Curves report the mean over 10 seeds, with shaded bands equal to
$\pm 1$ standard deviation. 
Time-to-threshold values are computed from the mean
accuracy curves on the same grid. 
Wall time is synchronized head-optimization
time only; it excludes feature extraction, JIT compilation, evaluation passes,
plotting, and CSV writes. 
Runs were executed on a single NVIDIA GeForce RTX
5050 GPU with 8151 MB of memory.

\begin{table}[t]
\centering
\small
\setlength{\tabcolsep}{4.5pt}
\caption{
\textbf{Frozen-feature linear-head summary on Cars196.}
Accuracy is reported in percent and CE denotes test cross-entropy.
Metrics are computed from the mean curves at the shared comparison horizon
of $3.65$ s. Time-to-threshold values are in seconds. ``--'' means the
threshold was not reached within the shared horizon.
}
\label{tab:headopt_cars196}
\begin{tabular}{lccccc}
\toprule
Method
& Acc. at 3.65 s
& CE at 3.65 s
& Time to 42\%
& Time to 43\%
& Time to 44\% \\
\midrule
Adam
& $42.24 \pm 0.09$
& $2.8449 \pm 0.0302$
& $0.693$
& --
& -- \\
SGN
& $44.16 \pm 0.09$
& $2.3603 \pm 0.0006$
& $0.774$
& $1.320$
& $3.017$ \\
FGN
& $44.29 \pm 0.07$
& $2.3519 \pm 0.0008$
& $0.561$
& $0.937$
& $2.209$ \\
\bottomrule
\end{tabular}
\end{table}

\end{document}